\newcommand{\argmin}[1]{\underset{#1}{\mathrm{argmin}} \:}
\newcommand{\inner}[2]{\left\langle #1, #2 \right\rangle}
\newcommand{\D}{\mathcal{D}}
\newcommand{\E}{\ensuremath{\mathbb{E}}}
\newcommand{\norm}[1]{\left\lVert{#1}\right\rVert}
\newcommand{\abs}[1]{\left\lvert{#1}\right\rvert}
\newcommand{\diag}{\operatorname{diag}}
\newcommand{\R}{\mathbb{R}}
\mathchardef\hyphen="2D
\newcommand{\T}{^\top}
\newcommand{\RR}{\mathbb{R}}
\newcommand{\EE}{\mathbb{E}}
\newtheorem{theorem}{Theorem}
\newcommand{\net}{{\rm net}}
\newcommand{\removed}[1]{}
\newcommand{\natinote}[1]{{\Large [[ {\color{red} {#1} -- Nati} ]]}}
\title{Data-Dependent Path Normalization \\in Neural Networks}
\author{Behnam Neyshabur \\
Toyota Technological Institute at Chicago\\
Chicago, IL 60637, USA\\
\texttt{bneyshabur@ttic.edu}                           
\And
Ryota Tomioka\\
Microsoft Research\\
Cambridge, UK\\
\texttt{ryoto@microsoft.com}\hspace{1.2in}
\And
Ruslan Salakhutdinov\\
Department of Computer Science\\
University of Toronto, Canada\\
\texttt{rsalakhu@cs.toronto.edu}\\
\And
Nathan Srebro\\
Toyota Technological Institute at Chicago\\
Chicago, IL 60637, USA\\
\texttt{nati@ttic.edu} \\
}
\newcommand{\var}{\text{Var}}
\newcommand{\cov}{\text{Cov}}
\newcommand{\In}{\text{in}}
\newcommand{\Out}{\text{out}}
\newcommand{\vecX}{\mathbf{x}}
\newcommand{\vecH}{\mathbf{h}}
\newcommand{\vecW}{\mathbf{w}}
\newcommand{\vecY}{\mathbf{y}}
\renewcommand{\vec}[1]{\mathbf{#1}}
\newcommand{\vw}{{\vec w}}
\newcommand{\vx}{{\vec x}}
\newcommand{\vz}{{\vec z}}
\newcommand{\vphi}{\boldsymbol{\phi}}
\newcommand{\vpi}{\boldsymbol{\pi}}
\newcommand{\vecGam}{\boldsymbol{\gamma}}
\newcommand{\vecDelta}{\boldsymbol{\Delta}}
\newcommand{\vecb}{\vec b}
\newcommand{\vecc}{\vec c}
\begin{document}
\maketitle

\begin{abstract}
  We propose a unified framework for neural net normalization,
  regularization and optimization, which includes Path-SGD and
  Batch-Normalization and interpolates between them across two
  different dimensions. Through this framework we investigate the 
issue of
  invariance of the optimization, data dependence and the connection
  with natural gradients.
\end{abstract}

\section{Introduction}
The choice of optimization method for non-convex, over-parametrized
models such as feed-forward neural networks is crucial to the
success of learning---not only does it affect the runtime until
convergence, but it also effects which minimum (or potentially local
minimum) we will converge to, and thus the generalization ability of 
the resulting
model.  Optimization methods are inherently tied to a choice of
geometry over parameter space, which in turns induces a geometry over
model space, which plays an important role in regularization and
generalization \citep{neyshabur15b}.

In this paper, we focus on two efficient alternative optimization
approaches proposed recently for feed-forward neural networks that are
based on intuitions about parametrization, normalization and the
geometry of parameter space: {\bf Path-SGD} \citep{NeySalSre15} was
derived as steepest descent algorithm with respect to particular
regularizer (the $\ell_2$-path regularizer, i.e.~the sum over all
paths in the network of the squared product over all weights in the
path ~\citep{NeyTomSre15}) and is invariant to weight
reparametrization.  {\bf Batch-normalization} \citep{IofSze15} was
derived by adding normalization layers in the network as a way of
controlling the variance of the input each unit receives in a
data-dependent fashion.  In this paper, we propose a unified framework
which includes both approaches, and allows us to obtain additional
methods which interpolate between them.  Using our unified framework,
we can also tease apart and combine two different aspects of these two
approaches: data-dependence and invariance to weight reparametrization.

Our unified framework is based on first choosing a per-node complexity
measure we refer to as $\gamma_v$ (defined in Section
\ref{sec:unified}).  The choice of complexity measure is parametrized
by a choice of ``normalization matrix'' $R$, and different choices for
this matrix incorporate different amounts of data dependencies: for
path-SGD, $R$ is a non-data-dependent diagonal matrix, while for batch
normalization it is a data-dependent covariance matrix, and we can
interpolate between the two extremes.  Once $\gamma_v$ is defined, and
for any choice of $R$, we identify two different optimization
approaches: one relying on a normalized re-parameterization at each
layer, as in batch normalization (Section \ref{sec:bn}), and the other
an approximate steepest descent as in path-SGD, which we refer to as
DDP-SGD (Data Dependent Path SGD) and can be implemented efficiently
via forward and backward propagation on the network (Section
\ref{sec:ddp-sgd}).  We can now mix and match between the choice of
$R$ (i.e.~the extent of data dependency) and the choice of
optimization approach.

One particular advantage of the approximate steepest descent approach
(DDP-SGD) over the normalization approach is that it is invariant to
weight rebalancing (discussed in Section \ref{sec:node-rescaling}).  
This is true
regardless of the amount of data-dependence used.  That is, it
operates more directly on the model (the function defined by the
weights) rather than the parametrization (the values of the weights
themselves).  This brings us to a more general discussion of
parametrization invariance in feedforward networks (Section 
\ref{sec:rescaling}).

Our unified framework and study of in invariances also allows us to
relate the different optimization approaches to Natural Gradients
\citep{Ama98}. In particular, we show that DDP-SGD with full
data-dependence can be seen as an efficient approximation of the
natural gradient using only the diagonal of the Fisher information
matrix (Section \ref{sec:natural}).

\subsection*{Related Works}
There has been an ongoing effort for better understanding of the 
optimization in 
deep networks and several heuristics have been suggested to improve 
the training
 \citep{lecun-98x, 
larochelle2009exploring,difficulty,sutskever2013importance}.
Natural gradient algorithm \citep{Ama98} is known to have a very strong
invariance property; it is not only invariant to reparametrization, but
also to the choice of network architecture. However it is known to
be computationally demanding and thus many approximations have been
proposed \citep{grosse2015scaling,martens2015optimizing,desjardins2015natural}.
However, such approximations make the algorithms less invariant than the
original natural gradient algorithm. \cite{pascanu2013revisiting} also discuss the connections between 
Natural
Gradients and some of the other proposed methods for training neural
networks, namely Hessian-Free Optimization \citep{martens2010deep}, 
Krylov
Subspace Descent \citep{vinyals2011krylov} and TONGA
\citep{roux2008topmoumoute}.

\cite{ollivier2015riemannian} also recently studied the issue of invariance and proposed computationally efficient approximations and alternatives to natural gradient. They study invariances as different mappings from parameter space to the same function space  while we look at the invariances as transformations (inside a fixed parameter space) to which the function is invariant in the model space (see Section~\ref{sec:rescaling}). Unit-wise algorithms suggested in Olivier's work are based on block-diagonal approximations of Natural Gradient in which blocks correspond to non-input units. The computational cost of the these unit-wise algorithms is quadratic in the number of incoming weights. To alleviate this cost, \cite{ollivier2015riemannian} also proposed quasi-diagonal approximations which avoid the quadratic dependence but they are only invariant to affine transformations of activation functions. The quasi-diagonal approximations are more similar to DDP-SGD in terms of computational complexity and invariances (see Section~\ref{sec:node-rescaling}). In particular, ignoring the non-diagonal terms related to the biases in quasi-diagonal natural gradient suggested in \cite{ollivier2015riemannian}, it is then equivalent to diagonal Natural Gradient which is itself equivalent to special case of DDP-SGD when $R_v$ is the second moment (see Table~\ref{tab:framework} and the discussion on relation to the Natural Gradient in Section~\ref{sec:ddp-sgd}).  

\section{Feedforward Neural Nets}\label{sec:ff}
We briefly review our formalization and notation of feedforward neural
nets. We view feedforward neural networks as a parametric class of
functions mapping input vectors to output vectors, where parameters
correspond to {\em weights} on connections between different {\em
  units}.  We focus specifically on networks of ReLUs (Rectified
Linear Units).  Rather than explicitly discussing units arranged in
layers, it will be easier for us (and more general) to refer to the
connection graph as a directed acyclic graph $G(V,E)$ over the set of
node $V$, corresponding to units $v\in V$ in the network.  $V$
includes the inputs nodes $V_{\rm in}$ (which do not have any incoming
edges), the output nodes $V_{\rm out}$ (which do not have any outgoing
edges) and additional internal nodes (possibly arranged in multiple
layers).  Each directed edge $(u\rightarrow v)\in E$ (i.e.~each
connection between units) is associated with a weight $w_{u
  \rightarrow v}$.  Given weight settings $\vecW$ for each edge, the
network implements a function $f_{\vecW}:\RR^{\abs{V_{\rm
      in}}}\rightarrow \RR^{\abs{V_{\rm out}}}$ as follows, for any
input $\vecX\in\RR^{\abs{V_{\rm in}}}$:
\begin{itemize}
\item For the input nodes $v\in V_{\rm in}$, their output $h_v$ is the
  corresponding coordinate of the input $\vecX$.
\item For each internal node $v$ we define recursively
  $z_v=\sum_{(u\rightarrow v)\in E} w_{u\rightarrow v}\cdot h_u$ and
  $h_v=[z_v]_+$ where $[z]_+=\max(z,0)$ is the ReLU activation function
  and the summation is over all edges incoming into $v$.
\item For output nodes $v\in V_{\rm out}$ we also have
  $z_v=\sum_{(u\rightarrow v)\in E} w_{u\rightarrow v}\cdot h_u$, and
  the corresponding coordinate of the output $f_\vecW(\vecX)$ is given
  by $z_v$.  No non-linearity is applied at the output nodes, and the
  interpretation of how the real-valued output corresponds to the
  desired label is left to the loss function (see below).
\item In order to also allow for a ``bias'' at each unit, we can
  include an additional special node $v_{\rm bias}$ that is connected
  to all internal and output nodes, where $h_{v_{\rm bias}}=1$ always
  ($v_{\rm bias}$ can thus be viewed as an additional input node whose
  value is always $1$).
\end{itemize}

We denote $N^{\In}(v)=\left\{ u \middle| (u\rightarrow v)\in
  E\right\}$ and $N^{\Out}(v)=\left\{ u \middle| (v\rightarrow u)\in
  E\right\}$, the sets of nodes feeding into $v$ and that $v$ feeds
into.  We can then write $\vecH_{N^\In(v)}\in\RR^{\left|
    N^\In(v)\right|}$ for the vector of outputs feeding into $v$, and
$\vecW_{\rightarrow v}\in\RR^{\left| N^\In(v)\right|}$ for the vector
of weights of unit $v$, so that $z_v=\inner{\vecW_{\rightarrow
    v}}{\vecH_{N^\In(v)}}$.  
        
We do not restrict to layered networks, nor do we ever need to
explicitly discuss layers, and can instead focus on a single node at a
time (we view this as the main advantage of the graph notation).  But
to help those more comfortable with layered networks understand the
notation, let us consider a layered fully-connected network:  The
nodes are partitioned into layers $V=V_0 \cup V_1 \cup \ldots
V_d$, with $V_{\rm in}=V_0$, $V_{\rm out}=V_d$. For all nodes $v\in
V_i$ on layer $i$, $N^\In(v)$ is the same and equal to
$N^\In(v)=V_{i-1}$, and so $\vecH_{N^\In(v)}=\vecH_{V_{i-1}}$ consists
of all outputs from the previous layer and we recover the layered
recursive formula $h_v=[\inner{\vecW_{\rightarrow
    v}}{\vecH_{V_{i-1}}}]_+$ and $\vecH_{V_{i}}=[\mathbf{W}_i
\vecH_{V_{i-1}}]_+$, where $\mathbf{W}\in
\RR^{\left|V_i\right|\times\left|V_{i-1}\right|}$ is a matrix with
entries $w_{u\rightarrow v}$, for each $u\in V_{i-1}, v\in V_i$.  This
description ignores the bias term, which could be modeled as a
direct connection from $v_{\rm bias}$ into every node on every layer,
or by introducing a bias unit (with output fixed to 1) at each layer. Please
see Figure~\ref{fig:notation} for an example of a layered feedforward
network and a summary of notation used in the paper.

\begin{figure}[tb]
\begin{center}
\hspace{-0.3in}
\subfigure{
\includegraphics[width=1\textwidth]{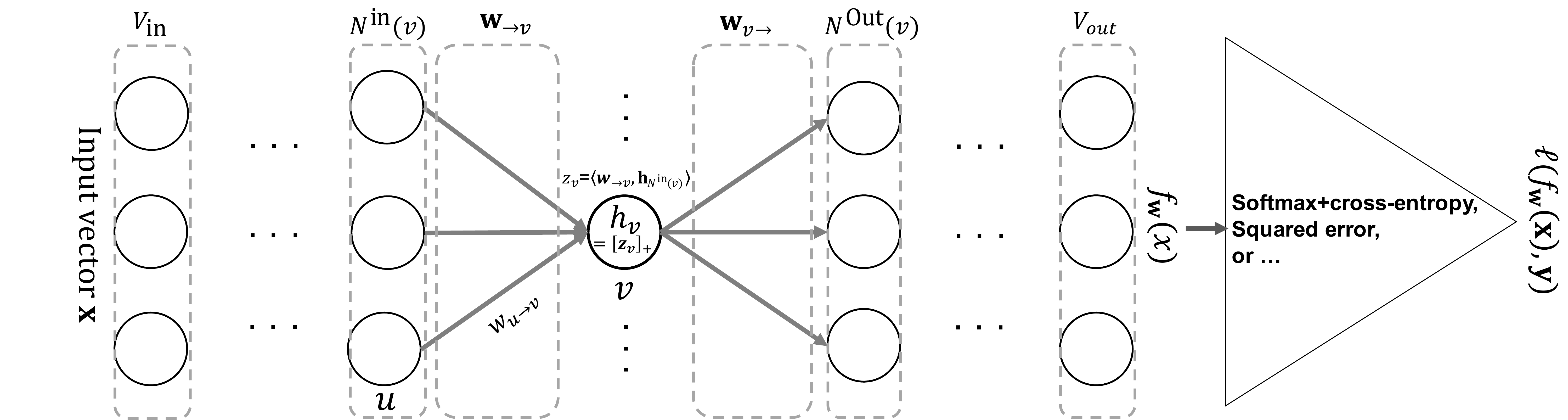}
}
\subfigure{
\small
\setlength\tabcolsep{4pt}
\begin{tabular}[tb]{|c|c|c|c|}
\hline
Symbol & Meaning &  Symbol
& Meaning\\
\hline
$\vecX$ / $\vec y$& input vector / label&  $V_\In$ / $V_\Out$& the set of input / output nodes\\
\hline
$\vecW$ & the parameter vector& $w_{u\rightarrow v}$
& the weight of the edge $(u\rightarrow v)$\\
\hline
$\vecW_{v\rightarrow}$ & the vector of incoming  weights to $v$ & $\vecW_{\rightarrow v}$
& the vector of outgoing  weights from $v$\\
\hline
$N^{\In}(v)$ & the set of nodes feeding into $v$ & $N^{\Out}(v)$
& the set of nodes that $v$ feeds into\\
\hline
$h_v$ & the output value of node $v$ & $z_v$
& the activation value of node $v$\\
\hline
\end{tabular}
}
  \caption{An example of layered feedforward networks and notation used in the paper}
  \label{fig:notation}
 \end{center}
\end{figure}

\setcounter{table}{0}

We consider supervised training tasks, where each input $\vecX$ is
associated with a desired label $y$ and how well the network captures
this label is quantified by a loss function
$\ell(f_{\vecW}(\vecX),y)$.  For example, in a classification problem
$y$ is one of $|V_{\rm out}|$ classes and a cross-entropy (soft-max)
loss might be used.  We also refer to a source distribution
$\D_{(\mathbf{x},\mathbf{y})}$ over input-label pairs, where the goal
is to minimize the expected loss:
\begin{equation}
L_{\D}(\vecW)=\E_{(\mathbf{x},y) \sim \D}
\left[ \ell(f_{\vecW}(\mathbf{x}),y) \right]
\end{equation}
All expectations, unless specified otherwise, refer to expectation
w.r.t.~this source distribution.

\subsection*{Invariances and Node-wise Rescaling}

Once the architecture (graph $G$) is fixed, every choice of weight
$\vecW$ defines a function $f_\vecW$.  But this parameterization is
not unique--the same function $f$ could be parameterized by two
different weight settings (i.e.~we could have $f_\vecW=f_{\vecW'}$
even though $\vecW\not=\vecW'$).  Ideally, we'd like to work as
directly as possibly on the functions rather then the
parameterization.  It is therefor important to understand different
``invariances'', i.e.~different transformations that can be applied to
the weights without changing the function.  We note that the notion of
invariance we define here is tied to a fixed network architecture $G$:
we do not consider transformation that changes the network
architecture, such as insertion of a linear layer
\cite[as in, e.g.][]{ollivier2015riemannian}.

We say that network $G$ is invariant to an invariant transformation
$T(\vecW)$, iff for any weight setting $\vecW$,
$f_{\vecW}=f_{T(\vecW)}$.  We say that an update rule $\mathcal{A}$
(e.g.~a rule for obtaining the next iterate from the current iterate
in an optimization procedure) is invariant to transformation $T$ iff
for any weight setting $\vecW$,
$f_{\mathcal{A}(\vecW)}=f_{\mathcal{A}(T(\vecW))}$.  That is, whether
we start an iterative optimization procedure using updates
$\mathcal{A}$ at $\vecW$ or the at the equivalent $\vecW'=T(\vecW)$,
we would always be working on the same function (only with a different
parameterization).

An important invariance in feedforward ReLU network is {\em node-wise
  rescaling} (or rebalancing). For any positive scalar $\rho$ and for
any internal node $v$ ($v\notin V_{\rm in}$ and $v\notin V_{\rm
  out}$), we can scale all the incoming weights into $v$ by $\rho$ and
all the outgoing weights by $1/\rho$ without changing the computation
in the networks.  That is, the following transformation
$\vecW'=T(\vecW)$ satisfies $f_{\vecW}=f_{\vecW'}$:
\begin{equation}
  \label{eq:node-wise}
\begin{aligned}
 w'_{v\rightarrow u} &= \rho w_{v\rightarrow u}\quad  (\forall u\in
N^{\Out}(u)),\\
 w'_{u\rightarrow v} &=\rho^{-1} w_{u\rightarrow v}\quad
 (\forall u\in N^\In(v))\\
 w'_{u\rightarrow u'} &=w_{u\rightarrow u'} \quad (\textrm{otherwise})
\end{aligned}
\end{equation}
We can combined multiple such rescalings to push the scaling up or
down the network without changing the computation.  One of our goals
is obtaining optimization algorithms that are invariant to such
transformations.


\begin{table}[tb]
\small
 \begin{center}
 \small
\setlength\tabcolsep{3pt}
\begin{tabular}[tb]{|c|c|c|c|}
\hline
$R_v$ & \pbox{20cm}{Measure} &  Normalized reparametrization
& Diagonal steepest descent\\
\hline
$D=\diag\left(\vecGam^2_{N^{\In(v)}}\right)$ & Path-Norm & Unit-wise Path-Normalization
& {\bf Path-SGD}\\
\hline
$C=\cov\left(\vecH_{N^{\In(v)})}\right)$ & Variance
& {\bf Batch-Normalization} & \\
\hline
$M=\E\left[\vecH_{N^{\In(v)})} \vecH_{N^{\In(v)})}^\top\right]$ & Second Moment & 
& {\bf Diag. Natural Gradient}\\
\hline
\pbox{20cm}{$\alpha M+ (1-\alpha)D$ \\ $\alpha C+ (1-\alpha)D$}
& DDP-Norm & DDP-Normalization
& DDP-SGD\\
\hline
Node-wise Rescaling Invariant & Yes
& No
& Yes\\
\hline
\end{tabular}
 \end{center}
 \caption{Some of the choices for $R_v$ in the proposed unified framework.}
 \label{tab:framework}
\end{table}

\section{A Unified Framework}\label{sec:unified}
We define a complexity measure on each node as follows:
\begin{equation}\label{eq:R}
\gamma_v(\vecW) = \sqrt{\vecW_{\rightarrow v}^\top R_v\vecW_{\rightarrow v} }
\end{equation}
where $R_v$ is a positive semidefinite matrix that could depend on the
computations feeding into $v$, and captures both the complexity of the
nodes feeding into $v$ and possibly their interactions.  We consider
several possibilities for $R_v$, summarized also in Table 1.  

A first possibility is to set
$R_v=\diag\left(\vecGam^2_{N^{\In(v)}}\right)$ to a diagonal matrix
consisting of the complexities of the incoming units.  This choice
does not depend on the source distribution (i.e.~the data), and also
ignores the effect of activations (since the activation pattern
depends on the input distribution) and of dependencies between
different paths in the network.  Intuitively, with this choice of
$R_v$, the measure $\gamma_v(\vecW)$ captures the ``potential'' (data
independent) variability or instability at the node.

Another possibility is to set $R_v$ to either the covariance
(centralized second moment) or to the (uncentralized) second moment
matrix of the outputs feeding into $v$.  In this case,
$\gamma^2_v(\vecW)$ would evaluate to the variance or (uncentralized)
second moment of $z_v$.  We could also linearly combined the data
independent measure, which measures inherent instability, with one of
these the data-dependent measure to obtain:
\begin{equation}\label{eq:gamma}
\gamma^2_v(\vecW) = \alpha S(z_v) + (1-\alpha)\sum_{u \in 
N^{\In} (v)}\gamma^2_u(\vecW) w^2_{u\rightarrow v}\quad (v\notin V_{\rm 
in}),
\end{equation}
where $S(z_v)$ is either the variance or uncentralized second moment,
and $\alpha$ is a parameter.

The complexity measure above is defined for each node of the network
separately, and propagates through the network.  To get an overall
measure of complexity we sum over the output units and define the
following complexity measure for the function $f_{\vecW}$ as
represented by the network:
\begin{equation}
\gamma^2_{\rm net}(\vecW) = \sum_{v\in V_{\rm out}}\gamma^2_v(\vecW).
\end{equation}

For $R_v=\diag\left(\vecGam^2_{N^{\In(v)}}\right)$, this complexity
measure agrees with the $\ell_2$-Path-regularizer as introduced by
\cite{NeyTomSre15}.  This is the sum over all paths in the network of the squared
product of weights along the path.  The path-regularizer is also
equivalent to looking at the minimum over all ``node rescalings'' of
$\vecW$ (i.e.~all possibly rebalancing of weights yielding the same
function $f_\vecW$) of the $\max_v \norm{\vecW_{\rightarrow v}}$.
But, unlike this max-norm measure, the path-regularizer does {\em not} depend
on the rebalancing and is invariant to node rescalings \citep{NeyTomSre15}.

For data-dependent choices of $R_v$, we also get a similar invariance
property.  We refer to the resulting complexity measure,
$\gamma^2_{\rm net}(\vecW)$, as the Data-Dependent-Path (DDP) regularizer.

\removed{
\natinote{Do we want to explain that it is a norm when we consider it
  over functions?  If so, we should include back the definition of
  $\gamma(f)$ which I removed.   Otherwise, I don't think we need it}.
}

\removed{
Such a measure induces a complexity measure on functions $f$ that can 
be presented by network $G$.
The complexity of $f$ based on DDP-regularizer over $G$
can then be defined as the complexity of the simplest network that 
represent the function:
\begin{equation}
\gamma(f) =\inf_{f_{G,\vecW}=f} \gamma_{\rm net}(\vecW)
\end{equation}
}

After choosing $R_v$, we will think of $\gamma_v$ as specifying the
basic ``geometry'' and bias (for both optimization and learning) over
weights.  In terms of learning, we will (implicitly) prefer weights
with smaller $\gamma_v$ measure, and correspondingly in terms of
optimization we will bias toward smaller $\gamma_v$ ``balls''
(i.e.~search over the part of the space where $\gamma_v$ is smaller).
We will consider two basic ways of doing this: In Section \ref{sec:bn}
we will consider methods that explicitly try to keep $\gamma_v$ small
for all internal nodes in the network, that is explicitly search over
simpler weights.  Any scaling is pushed to the output units, and this
scaling hopefully does not grow too much due.  In Section \ref{sec:ddp-sgd}
we will consider (approximate) steepest descent methods with respect
to the overall $\gamma_{\rm net}$, i.e.~updates that aim at improving
the training objective while being small in terms of their effect on
$\gamma_{\rm net}$.

\removed{

We can see from \eqref{eq:matform} that this is a specific way of
defining matrix $R_{N^\In(v)}$ in definition \eqref{eq:R}.
%
regularized second moment which combines a complexity measure with 
second moment of each node and this regularized second moment is 
propagating and will be used to calculate the regularized second 
moment for higher layers.
While we will be focusing on this measure, it is also possible to 
define $R_{N^\In(v)}$ based on the geometric mean instead of the 
arithmetic mean:
\begin{equation}
R_{N^\In(v)} = \E\left[ \vecH_{N^\In(u)} \vecH_{N^\In(u)}^\top\right]^\alpha 
\diag\left(\vecGam^2_{N^\In(u)}\right)^{(1-\alpha)}
\end{equation}

}

\section{DDP-Normalization: A Batch-Normalization Approach}\label{sec:bn}

In this Section, we discuss an optimization approach based on ensuring
$\gamma_v$ for all internal nodes $v$ are fixed and equal to
one---that is, the complexity of all internal nodes is ``normalized'',
and any scaling happens only at the output nodes.  We show that with a
choice of $R_v=\cov\left(\vecH_{N^{\In(v)})}\right)$, this is
essentially equivalent to Batch Normalization \citep{IofSze15}.

Batch-Normalization \cite{IofSze15} was suggested as an alternate
architecture, with special ``normalization'' layers, that ensure the
variance of node outputs are normalized throughout training.
Considering a feed-forward network as a graph, for each node $v$, the
Batch-Normalization architecture has as parameters an (un-normalized)
incoming weight vector $\tilde{\vecW}$ and two additional scalars
$c_v,b_v\in\R$ specifying scaling and shift respectively.  The
function computed by the network is then given by a forward
propagation similar to standard feed-forward ReLU networks as
described in Section \ref{sec:ff}, except that for each node an un-normalized
activation is first computed:
\begin{equation}
  \label{eq:ztilde}
  \tilde{z}_v = \inner{\tilde{\vecW}_{\rightarrow v}}{\vecH_{N^{\In}(v)}}
\end{equation}
Then, this activation is normalized to obtain the normalized
activation, which is also scaled and shifted, and the output of the
unit is the output of the activation function for this activation value:
\begin{equation}\label{eq:bn}
  \begin{gathered}
z_v=c_v
\frac{\tilde{z}_v-\E[\tilde{z_v}]}{\sqrt{\var(\tilde{z}_v)}}+b_v \\
    h_v=[z_v]_+
  \end{gathered}
\end{equation}
The variance and expectation are actually calculated on a
``mini-batch'' of training examples, giving the method its name.
Batch-normalization then proceeds by training the architecture
specified in \eqref{eq:ztilde} and \eqref{eq:bn} through mini-batch stochastic
gradient descent, with each gradient mini-batch also used for
estimating the variance and expectation in \eqref{eq:bn} for all
points in the mini-batch.

Instead of viewing batch-normalization as modifying the architecture,
or forward propagation, we can view it as a re-parameterization, or
change of variables, of the weights in standard feed-forward networks
as specified in Section \ref{sec:ff}.  In particular, instead of specifying
the weights directly through $\vecW$, we specify them through
$\tilde{\vecW},\vecb$ and $\vecc$, with the mapping:
\begin{align}
  &\tilde{\gamma}^2_v = \tilde{\vecW}_{\rightarrow v}^\top
  R_v\tilde{\vecW}_{\rightarrow v} \quad\quad R_v=\cov(h_{N^{\In}(v)})
  \label{eq:tildegamma}\\
&w_{u \rightarrow v} = 
\begin{cases}
c \frac{\tilde{w}_{u \rightarrow v}}{\tilde{\gamma}_v} & u\neq 
v_{\text{bias}}\\
b-c \frac{\E\left[\inner{\tilde{\vecW}_{\rightarrow v}}{\vecH_{N^{\In}(v)}}\right]}{\tilde{\gamma}_v}& u=v_{\text{bias}}\\
\end{cases}\label{eq:reparam}
\end{align}
The model class of functions used by Batch-Normalization is thus
exactly the same model class corresponding to standard feed-forward
network, just the parameterization is different.  However, the change
of variables from $\vecW$ to $\tilde{\vecW},\vecb,\vecc$ changes the
geometry implied by the parameter space, and consequently the
trajectory (in model space) of gradient updates---effectively
transforming the gradient direction by the Jacobian between the two
parameterizations.  Batch-Normalization can thus be viewed as an
alternate optimization on the same model class as standard
feed-forward networks, but with a different geometry.  The
reparametrization ensures that $\gamma_v(\vecW)=c_v$ for all
nodes---that is, the complexity is explicit in the parameterization
and thus gets implicitly regularized through the implicit
regularization inherent in stochastic gradient updates.

The re-parameterization \eqref{eq:reparam} is redundant and includes
more parameters than the original parameterization $\vecW$---in
addition to one parameter per edge, it includes also two additional
parameters per node, namely the shift $b_v$ and scaling $c_v$.  The
scaling parameters at internal nodes can be avoided and removed by
noting that in ReLU networks, due to the node-rescaling property, all
scaling can be done at the output nodes.  That is, fixing $c_v=1$ for
all internal $v$ does not actually change the model class (all
functions realizable by the model can be realized this way).
Similarly, we can also avoid the additional shift parameter $b_v$ and
rely only on bias units and bias weights $\tilde{w}_{v_{\rm
    bias}\rightarrow v}$ that get renormalized together with weights.
The bias term $\tilde{w}_{v_{\rm bias}\rightarrow v}$ does {\em not}
affect normalization (since it is deterministic and so has no effect
on the variance), it just gets rescaled with the other weights.

We thus propose using a simpler reparametrization (change of
variables), with the same number of parameters, using only
$\tilde{\vecW}$ and defining for each internal unit:
\begin{equation}\label{eq:ddp-reparam}
w_{u\rightarrow v}=\frac{\tilde{w}_{u\rightarrow v} }{\tilde{\gamma}_v}
\end{equation}
with $\tilde{\gamma}_v$ as in \eqref{eq:tildegamma}, and with the
output nodes un-normalized: $\vecW_{\rightarrow V_{\rm out}} =
\tilde{\vecW}_{\rightarrow V_{\rm out}}$.  This ensures that for all
internal nodes $\gamma_v(\vecW)=1$.


Going beyond Batch-Normalization, we can also use the same approach
with other choices of $R_v$, including all those in Table 1: We
work with a reparametrization $\tilde{\vecW}$, defined through
\eqref{eq:tildegamma} and \eqref{eq:ddp-reparam} but with different
choices of $R_v$, and take gradient (or stochastic gradient) steps
with respect to $\tilde{\vecW}$.  Expectations in the definition of
$R_v$ can be estimated on the stochastic gradient descent mini-batch
as in Batch-Normalization, or on independent samples of labeled or
unlabeled examples.  We refer to such methods as ``DDP-Normalized''
optimization.  Gradients in DDP-Normalization can be calculated
implemented very efficiently similar to Batch-Normalization (see
Appendix \ref{sec:BN-imp}).

When using this type of DDP-Normalization, we ensure that for any
internal node $\gamma_v(\vecW)=1$ (the value of $\tilde{\gamma}_v$ can
be very different from $1$, but what is fixed is the value of
$\gamma_v$ as defined in \eqref{eq:R} in terms of the weights $\vecW$,
which in turn can be derived from $\tilde{\vecW}$ through
\eqref{eq:reparam}), and so the overall complexity $\gamma_{\rm
  net}(\vecW)$ depends only on the scaling at the output layer.

Another interesting property of DDP-Normalization updates is that for
any internal node $v$, the updates direction of
$\tilde{w}_{\rightarrow v}$ is exactly orthogonal to the weights:
\begin{theorem}\label{thm:orthogonal}
For any weight $\tilde{\vecW}$ in DDP-Normalization and any non-input node $v\notin V_{\In}$
\begin{equation}
\inner{\tilde{\vecW}_{\rightarrow v}}{ \frac{ \partial L }{\partial \tilde{\vecW}_{\rightarrow v} } }=0
\end{equation} 
\end{theorem}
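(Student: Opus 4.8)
The plan is to exploit the fact that the reparametrization \eqref{eq:ddp-reparam} makes the loss $L$ invariant to positive rescaling of $\tilde{\vecW}_{\rightarrow v}$, and then invoke Euler's identity for homogeneous functions. The first and most important step is to observe that $R_v$ does not depend on $\tilde{\vecW}_{\rightarrow v}$: by construction $R_v=\cov(\vecH_{N^{\In}(v)})$ (or the second moment) is a function only of the outputs of the nodes feeding into $v$, which are determined by weights on edges lying strictly before $v$ in the DAG, and not by the incoming weights $\tilde{\vecW}_{\rightarrow v}$ themselves. Hence, when differentiating with respect to $\tilde{\vecW}_{\rightarrow v}$, we may treat $R_v$ as a constant matrix.

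Given this, note that $\tilde{\gamma}_v=\sqrt{\tilde{\vecW}_{\rightarrow v}^\top R_v\tilde{\vecW}_{\rightarrow v}}$ is positively homogeneous of degree one in $\tilde{\vecW}_{\rightarrow v}$, so the actual weights produced by \eqref{eq:ddp-reparam}, namely $w_{u\rightarrow v}=\tilde{w}_{u\rightarrow v}/\tilde{\gamma}_v$, are unchanged under $\tilde{\vecW}_{\rightarrow v}\mapsto\rho\,\tilde{\vecW}_{\rightarrow v}$ for any $\rho>0$: the factor $\rho$ in the numerator cancels the $\rho$ from $\tilde{\gamma}_v$. Since the entire downstream computation (and hence $f_\vecW$, and therefore $L$) depends on $\tilde{\vecW}_{\rightarrow v}$ only through these normalized weights $w_{u\rightarrow v}$, the map $\rho\mapsto L(\rho\,\tilde{\vecW}_{\rightarrow v})$, holding all other weights fixed, is constant. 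Differentiating this identity in $\rho$ and evaluating at $\rho=1$ yields exactly $\inner{\tilde{\vecW}_{\rightarrow v}}{\partial L/\partial\tilde{\vecW}_{\rightarrow v}}=0$, which is the claim.

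As a fully explicit alternative, one can verify the identity by direct chain rule. Writing $\partial\tilde{\gamma}_v/\partial\tilde{w}_{u\rightarrow v}=(R_v\tilde{\vecW}_{\rightarrow v})_u/\tilde{\gamma}_v$ and differentiating $w_{u'\rightarrow v}=\tilde{w}_{u'\rightarrow v}/\tilde{\gamma}_v$ gives
\begin{equation}
\frac{\partial w_{u'\rightarrow v}}{\partial\tilde{w}_{u\rightarrow v}}=\frac{\delta_{uu'}}{\tilde{\gamma}_v}-\frac{\tilde{w}_{u'\rightarrow v}\,(R_v\tilde{\vecW}_{\rightarrow v})_u}{\tilde{\gamma}_v^{3}}.
\end{equation}
Contracting $\sum_u\tilde{w}_{u\rightarrow v}\,(\partial L/\partial\tilde{w}_{u\rightarrow v})$ through the chain rule and using $\tilde{\vecW}_{\rightarrow v}^\top R_v\tilde{\vecW}_{\rightarrow v}=\tilde{\gamma}_v^2$ reduces the subtracted contribution to the leading one, so the two cancel and the inner product vanishes.

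The only real subtlety---and hence the step I would guard most carefully---is the independence of $R_v$ from $\tilde{\vecW}_{\rightarrow v}$ asserted in the first paragraph; it is what licenses treating $R_v$ as constant and is precisely what makes the homogeneity degree exactly zero. If $R_v$ depended on $\tilde{\vecW}_{\rightarrow v}$ (e.g.\ through some self-referential normalization), the cancellation would fail and the gradient would no longer be orthogonal. Everything else is a routine consequence of degree-zero homogeneity.
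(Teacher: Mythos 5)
Your proof is correct, but it takes a genuinely different route from the paper's. The paper proves the identity by direct substitution: it takes the explicit backprop formulas \eqref{eq:dl} and \eqref{eq:dgam} derived in Appendix~\ref{sec:BN-imp}, first verifies $\inner{\tilde{\vecW}_{\rightarrow v}}{\partial \tilde{\gamma}_v^2/\partial \tilde{\vecW}_{\rightarrow v}}=2\tilde{\gamma}_v^2$ (which is just Euler's identity for the degree-two homogeneous quadratic form $\tilde{\gamma}_v^2$, computed by hand), and then watches the two terms inside the bracket of \eqref{eq:dl} cancel. You instead argue conceptually: $R_v$ does not depend on $\tilde{\vecW}_{\rightarrow v}$, so $\tilde{\gamma}_v$ is degree-one homogeneous, the normalized weights $w_{u\rightarrow v}=\tilde{w}_{u\rightarrow v}/\tilde{\gamma}_v$ are degree-zero homogeneous, $L$ depends on $\tilde{\vecW}_{\rightarrow v}$ only through them, and Euler's identity for degree-zero homogeneity gives the result. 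This is exactly the rescaling invariance the paper itself records in Section~\ref{sec:node-rescaling} for DDP-Normalized networks, so your proof makes the logical dependence explicit (invariance of the reparametrization $\Rightarrow$ orthogonality of the gradient) where the paper's computation leaves it implicit. Your argument is also more robust: it does not rely on the particular form of \eqref{eq:dl} and works verbatim for any choice of $R_v$ in Table~\ref{tab:framework} and any downstream architecture, whereas the paper's verification is tied to the specific $\alpha$-interpolated covariance formulas. One clarifying byproduct of your approach: it shows the identity holds precisely at nodes where the normalization \eqref{eq:ddp-reparam} is actually applied, so for output nodes (which the text leaves un-normalized, $\vecW_{\rightarrow V_{\rm out}}=\tilde{\vecW}_{\rightarrow V_{\rm out}}$) the homogeneity degenerates and the orthogonality should not be expected --- a scope restriction the theorem's phrase ``any non-input node'' glosses over. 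Your explicit chain-rule alternative is also correct and amounts to the same cancellation the paper performs, organized around the Jacobian of the reparametrization map rather than the backprop recursion.
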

The fact that the gradient is orthogonal to the parameters
means weight updates in DDP-Normalization are done
in a way that it prevents the norm of weights to change considerably 
after each updates (the proof is given in Appendix \ref{sec:proofs}).

\section{DDP-SGD}\label{sec:ddp-sgd}

We now turn to a more direct approach of using our complexity measure
for optimization.  To do so, let us first recall the strong connection
between geometry, regularization and optimization through the specific
example of gradient descent.  

Gradient descent can be thought of as steepest descent with respect to
the Euclidean norm---that is, it takes a step in a direction that
maximizes improvement in the objective while also being small in terms
of the Euclidean norm of the step.  The step can also be viewed as a
regularized optimization of the linear approximation given by the
gradient, where the regularizer is squared Euclidean norm.  Gradient
Descent is then inherently linked to the Euclidean norm---runtime of
optimization is controlled by the Euclidean norm of the optimum and
stochastic gradient descent yields implicit Euclidean norm
regularization.  A change in norm or regularizer, which we think of as
a change of geometry, would then yield different optimization
procedure linked to that norm.

What we would like is to use the DDP-regularizer $\gamma_{\rm \net}(\vecW)$
to define our geometry, and for that we need a distance (or
divergence) measure corresponding to it by which we can measure the
``size'' of each step, and require steps to be small under this
measure.  We cannot quite do this, but instead we use a diagonal
quadratic approximation of $\gamma_{\rm net}(\vecW)$ about our current
iterate, and then take a steepest descent step w.r.t.~the quadratic
norm defined by this approximation.

Specifically, given a choice of $R_v$ and so complexity measure
$\gamma_{\rm net}(\vecW)$, for the current iterate $\vecW^{(t)}$ we define the
following quadratic approximation:
\begin{equation}
\hat{\gamma}^2_{\rm net}(\vecW^{(t)}+\Delta \vecW) = \gamma^2_{\rm net}(\vecW^{(t)})+
\inner{\nabla \gamma^2_{\rm net}(\vecW^{(t)})}{\Delta \vecW} +
\frac{1}{2} \Delta \vecW^\top
\diag\left(\nabla^2  \gamma^2_{\rm net}(\vecW^{(t)})\right) \Delta \vecW
\end{equation}
and the corresponding quadratic norm:
\begin{equation}
\norm{\vecW'-\vecW}^2_{\hat{\gamma}^2_{\rm net} } =
\norm{\vecW'-\vecW}^2_{\diag(\frac{1}{2}\nabla^2 \gamma^2_{\rm
    net}(\vecW^{(t)}))}=\sum_{(u\rightarrow v)\in G} 
\frac{1}{2}\frac{\partial^2 \gamma^2_{\rm net}}{\partial \vecW^2_{u\rightarrow v}} (\vecW'_{u\rightarrow v}-\vecW_{u\rightarrow v})^2.
\end{equation}
We can now define the DDP-update as:
\begin{equation}\label{eq:ddp}
\vecW^{(t+1)}=\min_{\vecW} \eta \inner{\nabla L(w)}{\vecW-\vecW^{(t)}} + 
\frac{1}{2} \norm{\vecW'-\vecW}^2_{\hat{\gamma}^2_{\rm net} }.
\end{equation}
Another way of viewing the above approximation is as taking a diagonal
quadratic approximation of the Bergman divergence of the regularizer.
Solving \eqref{eq:ddp} yields the update:
\begin{equation}
w^{(t+1)}_{u\rightarrow v} = w_{u\rightarrow v} - \frac{\eta}{\kappa_{u\rightarrow v}(\vecW)} \frac{\partial 
L}{\partial w_{u\rightarrow v}}(\vecW^{(t)}) \quad\quad\textrm{where: } \kappa_{u\rightarrow v}(\vecW)=\frac{1}{2}\frac{\partial^2 \gamma^2_{\rm 
net}}{\partial w^2_{u\rightarrow v}}.
\end{equation}
Instead of using the full gradient, we can also use a limited number
of training examples to obtain stochastic estimates of $\frac{\partial 
L}{\partial w_{u\rightarrow v}}(\vecW^{(t)})$---we refer to the
resulting updates as DDP-SGD.

For the choice $R_v=\diag(\gamma^2_{N^{\In}(v)})$, we have that
$\gamma^2_{\rm net}$ is the Path-norm and we recover Path-SGD
\cite{NeySalSre15}. As was shown there, the Path-SGD updates can be
calculated efficiently using a forward and backward propagation on the
network, similar to classical back-prop.  In Appendix \ref{sec:ddp-imp} we show
how this type of computation can be done more generally also for other
choices of $R_v$ in Table 1.

\removed{
\subsection{Relation to Path-SGD}\label{sec:path-sgd}
Path-SGD update rule can be written as follows:
\begin{equation}
w^{(t+1)}_e = w_e - \frac{\eta}{\mu_{\In}^2(u)\mu_{\Out}^2(v)} \frac{\partial 
L}{\partial w_e}(\vecW^{(t)})
\end{equation}
where values of $\mu_{\In}$ and $\mu_{\Out}$ for nodes can be calculated recursively:
\begin{align*}
\mu_{\In}^2(v)&= \sum_{u \in N^{\In}(v)} \mu_{\In}^2(u) w_{u\rightarrow v}^2, &&\forall_{v \in V_{\In}}\; \mu_{\In}(v)=1\\
\mu_{\Out}^2(v)&= \sum_{u \in N^{\Out}(v)} \mu_{\Out}^2(u) w_{v\rightarrow u}^2, &&\forall_{v\in V_{\Out}} \; \mu_{\Out}(v)=1
\end{align*}
We next prove that DDP-SGD with a specific choice of $R_v$ is equivalent to Path-SGD.
\begin{theorem}
Path-SGD is equivalent to DDP-SGD for $R_v=\diag\left(\vecGam^2_{N^{\In}(v)}\right)$.
\end{theorem}
\begin{proof}
We need to show that for any edge $(u\rightarrow v)\in E$, the scaling factors are the same; i.e. $\kappa_{u \rightarrow v}(\vecW)=\mu_{\In}^2(u)\mu_{\Out}^2(v)$. First, note that based on definition, we have that for any node $v$, $\gamma_v=\mu_{\In}(v)$. Moreover, for any vertex $v$ we have:
\begin{equation}
\frac{\partial \gamma_{v'}^2}{\partial \gamma_{v}} = \sum_{u\in N^{\Out}(v)}  \frac{\partial \gamma_{v'}^2}{\partial \gamma^2_{u}} w_{v\rightarrow u}^2
\end{equation}
where $v'$ is any vertex above $v$ in the network. Since for any $v',v''\in V_{\Out}$ we have $\frac{\partial \gamma_{v'}^2}{\partial \gamma^2_{v''}}=1_{v'=v''}$, it is possible to show by induction that:
\begin{equation}
\sum_{v' \in V_{\Out}}\frac{\partial \gamma_{v'}^2}{\partial \gamma^2_{v}}=\gamma^2_{\Out}(v)
\end{equation}
We now calculate the scaling factor of DDP-SGD for $R_v=\diag\left(\vecGam_{N^{\In}(v)}^2\right)$:
\begin{align*}
\kappa_{u\rightarrow v}(\vecW)&=\frac{1}{2}\frac{\partial^2 \gamma^2_{\rm net}}{\partial w_{u\rightarrow v}^2}=\frac{1}{2}\sum_{v'\in V_{\Out}}\frac{\partial^2  \gamma^2_{v'} }{\partial w_{u\rightarrow v}^2} = \sum_{v'\in V_{\Out}} \frac{\partial}{\partial w_{u\rightarrow v}}\left(\frac{1}{2}\frac{\partial \gamma^2_{v'}}{\partial w_{u\rightarrow v}}\right)\\
&= \sum_{v'\in V_{\Out}}\frac{\partial}{\partial w_{u\rightarrow v}}\left(w_{u\rightarrow v}  \gamma^2_{u}\frac{\partial \gamma^2_{v'}}{\partial \gamma^2_{v}}\right)
= \sum_{v'\in V_{\Out}}\gamma^2_{u}\frac{\partial \gamma^2_{v'}}{\partial \gamma^2_{v}}\\
&= \gamma^2_{u}\sum_{v'\in V_{\Out}}\frac{\partial \gamma^2_{v'}}{\partial \gamma^2_{v}} = \mu_{\In}^2(u)\mu_{\Out}^2(v).
\end{align*}
\end{proof}}

\subsection*{Relation to the Natural Gradient}\label{sec:natural}

The DDP updates are similar in some ways to Natural Gradient updates,
and it is interesting to understand this connection.  Like the DDP,
the Natural Gradients direction is a steepest descent direction, but
it is based on a divergence measure calculated directly on the
function $f_{\vecW}$, and not the parameterization $\vecW$, and as
such is invariant to reparametrizations.  The natural gradient is
defined as a steepest descent direction with respect to the
KL-divergence between probability distributions, and so to refer to it
we must refer to some probabilistic model.  In our case, this will be
a conditional probability model for labels $\vecY$ conditioned on the
inputs $\vecX$, taking expectation with respect to the true marginal
data distribution over $\vecX$.

What we will show that for the choice
$R_v=\E[\vecH_{N^{\In}(v)}\vecH_{N^{\In}(v)}^\top]$, the DDP update
can also be viewed as an approximate Natural Gradient update.  More
specifically, it is a diagonal approximation of the Natural Gradient
for a conditional probability model $q(\vecY| \vecX;\vecW)$ (of the labels
$\vecY$ given an input $\vecX$) parametrized by $\vecW$ and specified
by adding spherical Gaussian noise to the outputs of the network:
$\vecY|\vecX\sim \mathcal{N}(f_\vecW(\vecX),I_{|V_{\rm out}|})$.

Given the conditional probability distribution $q(\vecY|\vx;\vecW)$, we can
calculate the expected Fisher information matrix.  This is a matrix
indexed by parameters of the model, in our case edges $e=(u\rightarrow
v)$ on the graph
and their corresponding weights $w_e$, with entries defined as follows:
 \begin{align}
  \label{eq:fisher-information-m}
 F(\vw)[e,e'] = \EE_{\vx\sim p(\vecX)}\EE_{\vecY\sim q(\vecY|\vx;\vecW)}\left[
\frac{\partial \log q(\vecY|\vx;\vecW)}{\partial w_e}
\frac{\partial \log q(\vecY|\vx;\vecW)}{\partial w_{e'}}
 \right],
 \end{align}
where $x\sim p(\vecX)$ refers to the marginal source distribution (the data
distribution).  That is, we use the true marginal distributing over
$\vecX$, and the model conditional distribution $\vecY|\vecX$, ignoring
the true labels. The Natural
Gradient updates can then be written as(see appendix \ref{sec:ng} for more 
information):
\begin{equation}\label{eq:ng-m}
  \vecW^{(t+1)} = \vecW^{(t)} - \eta F(\vecW^{(t)})^{-1} \nabla_\vecW L(\vecW^{(t)}).
\end{equation}

If we approximate the Fisher information matrix
with its diagonal elements, the update step normalizes each dimension 
of the gradient with the corresponding element on the diagonal of 
the Fisher information matrix:
\begin{equation}\label{eq:dng}
w^{(t+1)}_{e} = w^{(t)}_{e} - \frac{\eta}{F(\vecW)[e,e]} \frac{\partial L}{\partial 
w_{e}}(\vecW^{(t)}).
\end{equation}

Using diagonal approximation of Fisher information matrix to normalize the gradient values has been suggested before as a computationally tractable alternative to the full Natural Gradient \citep{lecun1998neural,schaul2013no}.  \cite{ollivier2015riemannian} also suggested a ``quasi-diagonal" approximations that includes, in addition to the diagonal, also some non-diagonal terms corresponding to the relationship between the bias term and every other incoming weight into a unit.

For our Gaussian probability model, where $\log
 q(\vecY|\vecX)=\frac{1}{2}\norm{\vecY-f_\vecW(\vecX)}^2+{\rm 
const}$, the diagonal can be calculated as:
\begin{equation}\label{eq:diag}
F(\vecW)[e,e] = \E_{\vecX\sim
 p(\vecX)}\left[\sum_{v'\in V_\Out} \left(\frac{\partial 
f_\vecW(\vecX)[v']}{\partial w_e}\right)^2\right],
\end{equation}
using \eqref{eq:partial-logq}. We next prove that this update is equivalent to
DDP-SGD for a specific choice of $R_v$, namely the second moment.
\begin{theorem}
The Diagonal Natural Gradient indicated in equations~\eqref{eq:dng} and ~\eqref{eq:diag} is equivalent to DDP-SGD for $R_v=\E\left[\vecH_{N^{\In}(v)}\vecH_{N^{\In}(v)}^\top\right]$.
\end{theorem}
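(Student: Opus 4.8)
The plan is to show that the two update rules agree edge by edge, by proving that their per-coordinate scaling factors coincide: for every edge $e=(u\rightarrow v)$ I will establish that the DDP-SGD factor $\kappa_{u\rightarrow v}(\vecW)=\frac{1}{2}\,\partial^2\gamma^2_{\rm net}/\partial w_{u\rightarrow v}^2$ equals the diagonal Fisher entry $F(\vecW)[e,e]$ of \eqref{eq:diag}. Since both \eqref{eq:dng} and the DDP-SGD update have the identical form $w_e^{(t+1)}=w_e-(\eta/(\cdot))\,\partial L/\partial w_e$ and use the same gradient, matching the scaling factors immediately yields the same next iterate for the two methods.

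First I would pin down $\gamma^2_{\rm net}$ for the second-moment choice. With $R_v=\E[\vecH_{N^{\In}(v)}\vecH_{N^{\In}(v)}^\top]$ we have $\gamma^2_v(\vecW)=\vecW_{\rightarrow v}^\top R_v\vecW_{\rightarrow v}=\E[z_v^2]$, since $z_v=\inner{\vecW_{\rightarrow v}}{\vecH_{N^{\In}(v)}}$. Summing over the output nodes, where $z_{v'}=f_\vecW(\vecX)[v']$, gives
\begin{equation}
\gamma^2_{\rm net}(\vecW)=\E_{\vecX}\Big[\sum_{v'\in V_\Out} f_\vecW(\vecX)[v']^2\Big].
\end{equation}
Differentiating twice in $w_e$ and interchanging differentiation with the expectation yields
\begin{equation}
\frac{1}{2}\frac{\partial^2\gamma^2_{\rm net}}{\partial w_e^2}
=\E_{\vecX}\Big[\sum_{v'\in V_\Out}\Big(\frac{\partial f_\vecW(\vecX)[v']}{\partial w_e}\Big)^2
+ f_\vecW(\vecX)[v']\,\frac{\partial^2 f_\vecW(\vecX)[v']}{\partial w_e^2}\Big].
\end{equation}
The first summand is precisely the expression for $F(\vecW)[e,e]$ in \eqref{eq:diag}, so it remains only to show that the second (Hessian) summand vanishes.

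The crux is establishing $\partial^2 f_\vecW(\vecX)[v']/\partial w_e^2=0$. This is where the ReLU structure enters. Fixing the activation pattern (the set of units whose ReLU is active), the output $f_\vecW(\vecX)[v']$ is a sum over directed input-to-output paths of the products of weights along each path, each term gated by the active/inactive indicators of the nodes on the path. Because $G$ is a DAG, no path traverses the edge $e$ more than once, so every such term is affine (of degree at most one) in $w_e$; hence $f_\vecW(\vecX)[v']$ is affine in $w_e$ for a fixed activation pattern and its second derivative in $w_e$ is zero.

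The main obstacle is handling the non-smoothness carefully: at weight settings where an activation flips, $f$ is not twice differentiable, so the identity must be understood to hold off the measure-zero set of such boundaries, and one must justify (as in the derivation of Path-SGD) that the interchange of expectation and differentiation used above is valid. With that in hand, combining the pieces gives $\kappa_{u\rightarrow v}(\vecW)=\frac{1}{2}\,\partial^2\gamma^2_{\rm net}/\partial w_e^2=\E_{\vecX}[\sum_{v'\in V_\Out}(\partial f_\vecW(\vecX)[v']/\partial w_e)^2]=F(\vecW)[e,e]$, which is exactly the diagonal Fisher entry of \eqref{eq:diag}; the two update rules therefore coincide.
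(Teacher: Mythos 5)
Your proposal is correct and follows essentially the same route as the paper's proof: both reduce the claim to the identity $\tfrac{1}{2}\,\partial^2 \E[z_{v'}^2]/\partial w_e^2 = \E\bigl[(\partial z_{v'}/\partial w_e)^2\bigr]$, which holds because the output is affine in any single weight (the paper realizes this implicitly by noting that $h_u$ and $\partial z_{v'}/\partial z_v$ do not depend on $w_{u\rightarrow v}$, while you make it explicit via the path decomposition and the DAG structure). Your version is, if anything, slightly more careful in explicitly killing the Hessian term and flagging the measure-zero non-smoothness of the ReLU activations, which the paper glosses over.
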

\begin{proof}
We calculate the scaling factor $\kappa_{u\rightarrow v}(\vecW)$ for 
DDP-SGD as follows:
\begin{align*}
\kappa_{u\rightarrow v}(\vecW)&=\frac{1}{2}\frac{\partial^2 \gamma^2_{\rm net}}{\partial w_{u\rightarrow v}^2}=\frac{1}{2}\sum_{v'\in V_{\Out}}\frac{\partial^2 \E[z^2_{v'}]}{\partial w_{u\rightarrow v}^2} = \sum_{v'\in V_{\Out}} \frac{\partial}{\partial w_{u\rightarrow v}}\left(\frac{1}{2}\frac{\partial \E[z^2_{v'}]}{\partial w_{u\rightarrow v}}\right)\\
&= \sum_{v'\in V_{\Out}}\frac{\partial}{\partial w_{u\rightarrow v}}\left(\E\left[ z_{v'} \frac{\partial z_{v'}}{\partial w_{u\rightarrow v}}\right]\right)
= \sum_{v'\in V_{\Out}}\frac{\partial}{\partial w_{u\rightarrow v}}\left(\E\left[ z_{v'} h_u\frac{\partial z_{v'}}{\partial z_{v}}\right]\right)\\
&= \sum_{v'\in V_{\Out}}\E\left[ h^2_u\left(\frac{\partial z_{v'}}{\partial z_{v}}\right)^2\right] = \E\left[h^2_u \sum_{v'\in V_{\Out}} \left(\frac{\partial z_{v'}}{\partial z_{v}}\right)^2\right]\\
&=\E\left[\sum_{v'\in V_\Out} \left(\frac{\partial 
f_\vecW(\vecX)[v']}{\partial w_e}\right)^2\right] = F(\vecW)[u\rightarrow v,u\rightarrow v]\\
\end{align*}
Therefore, the scaling factors in DDP-SGD  with $R_v=\E\left[\vecH_{N^{\In}(v)}\vecH_{N^{\In}(v)}^\top\right]$
are exactly the diagonal elements of the Fisher Information matrix used 
in the Natural Gradient updates.
\end{proof}

\removed{
In classification tasks, we usually use softmax activations and it that case, the 
diagonal of the Fisher Information can be calculated as:
\begin{equation}
 F(\vecW)[u\rightarrow v,u\rightarrow v] = \E\left[h^2_{u}
  \sum_{v'\in V_{\rm out}} \sum_{v''\in V_{\rm out}}
  \left(1_{v'=v''}\cdot\sigma_{\text{soft}}(z_{v'})  
-\sigma_{\text{soft}}(z_{v'})\sigma_{\text{soft}}(z_{v''})\right)
 \frac{\partial  z_{v'}}{\partial z_v}\cdot
 \frac{\partial z_{v''}}{\partial z_v}
  \right]
\end{equation}
where $\sigma_{\text{soft}}$ is the softmax function and we have that 
$\sigma_{\text{soft}}(z_v) = \frac{e^{z_v}}{\sum_{v'\in V_\Out} 
e^{z_{v'}}}$. The above values can be calculated as fast as 
$\abs{V_\Out}$ backpropagation on the mini-batch. Moreover, 
considering the connection between DDP-SGD and the diagonal of outer 
product of Jacobians, we can consider adding the path-regularizer to 
the diagonal of the Fisher information.
}
\section{Node-wise invariance} \label{sec:node-rescaling} In this
section, we show that DDP-SGD is invariant to node-wise rescalings
(see Section \ref{sec:ff}), while DDP-Normalization does not have favorable
invariance properties.

\subsection{DDP-SGD on feedforward networks}
In Section \ref{sec:ff}, we observed that feedforward ReLU networks are
invariant to node-wise rescaling.  To see if DDP-SGD is also invariant
to such rescaling, consider a rescaled $\vecW'=T(\vecW)$, where $T$ is
a rescaling by $\rho$ at node $v$ as in \eqref{eq:node-wise}.  Let
$\vecW^+$ denote the weights after a step of DDP-SGD.  To establish
invariance to node-rescaling we need to show that
$\vecW'^+=T(\vecW^+)$.  For the outgoing weights from $v$  we have:
\begin{align*}
w'^{+}_{v\rightarrow j} &= \rho w_{v\rightarrow j} - 
\frac{\rho^2 \eta}{\kappa_{v\rightarrow j}(\vecW)}\frac{\partial 
L}{\rho \partial w_{v\rightarrow j}}(\vecW)\\
&=\rho\left(w_{v\rightarrow j}- \frac{\eta}{\kappa_{v\rightarrow 
j}(\vecW)}\frac{\partial L}{\partial w_{v\rightarrow j}}(\vecW)\right)
 = \rho w^{+}_{v\rightarrow j}\\
\end{align*}
Similar calculations can be done for incoming weights to the node $v$.
The only difference is that $\rho$ will be substituted by $1/\rho$. Moreover,
note that due to non-negative homogeneity of ReLU activation function,
the updates for the rest of the weights remain exactly the same.
Therefore, DDP-SGD is node-wise rescaling invariant.

\subsection{SGD on DDP-Normalized networks}
Since DDP-Normalized networks are reparametrization of feedforward
networks, their invariances are different. Since the operations in DDP-Normalized
networks are based on $\tilde{w}$, we should study the invariances for
$\tilde{w}$.  The invariances in this case are given by rescaling of
incoming weights into a node, i.e. for an internal node $v$ and
scaling $\rho>0$:
\begin{equation}
T(\tilde{w})_{k\rightarrow v} =\rho \tilde{w}_{k\rightarrow v}\quad
 (\forall k\in N^\In(v))\notag
\end{equation}
while all other weights are unchanged.  The DDP-Normalized networks
are invariant to the above transformation because the output of each
node is normalized.  The SGD update rule is however not invariant to
this transformation:
\begin{align*}
T(\tilde{w})^{+}_{k\rightarrow v} &= \rho \tilde{w}_{k\rightarrow 
v}-\eta\frac{\partial L}{\rho \partial \tilde{w}_{k\rightarrow 
v}}(\tilde{\vecW})
\neq \rho\left(\tilde{w}_{k\rightarrow v}-\eta \frac{\partial L}{\partial 
\tilde{w}_{k\rightarrow v}}(\tilde{\vecW})\right) = \rho 
\tilde{w}^{+}_{k\rightarrow v}\\
\end{align*}

\section{Understanding Invariances} \label{sec:rescaling}
The goal of this section is to discuss whether being invariant to node-wise
rescaling transformations is sufficient or not.

Ideally we would like our algorithm to be at least invariant to all the transformations to which the model $G$ is invariant. Note that this is different than the invariances studied in \cite{ollivier2015riemannian}, in that they study algorithms that are invariant to reparametrizations of the same model but we look at transformations within the the parameter space that preserve the function in the model. This will eliminate the need for non-trivial initialization. Thus our goal is to characterize the whole variety of transformations to which the model is invariant and check if the algorithm is invariant to all of them.

We first need to note that invariance can be composed. If a 
network $G$ is invariant to transformations $T_1$ and
$T_2$, it is also invariant to their composition $T_1\circ T_2$. This is
also true for an algorithm. If an algorithm is invariant to
transformations $T_1$ and $T_2$, it is also invariant to their
composition. This is because $f_{T_2\circ T_1\circ
\mathcal{A}(\vw)}=f_{T_2\circ \mathcal{A}(T_1\circ
\vw)}=f_{\mathcal{A}(T_2\circ T_1(\vw))}$.

Then it is natural to talk about the {\em basis} of invariances. The
intuition is that although there are infinitely many transformations to which the
model (or an algorithm) is invariant, they could be generated as  
compositions of finite number of transformations.

In fact, in the infinitesimal limit the directions of infinitesimal
changes in the parameters to which the function $f_{\vw}$ is
insensitive form a subspace. This is because for a  fixed input $\vx$,
we have
\begin{align}
\label{eq:taylor-expansion}   
 f_{\vw+\vecDelta}(x) = f_{\vw}(\vx) + \sum\nolimits_{e\in E}\frac{\partial f_{\vw}(\vx)}{\partial w_{e}}\cdot \Delta_{e} + O(\|\vecDelta\|^2),
\end{align}
where $E$ is the set of edges,
due to a Taylor expansion around $\vw$. Thus the function $f_{\vw}$
is insensitive (up to $O(\|\vecDelta\|^2)$) to any change in
the direction $\vecDelta$ that lies in the (right) null space of the
Jacobian matrix $\partial
f_{\vw}(\vx)/\partial \vw$ for all input $\vx$
simultaneously. More formally, the subspace can be defined as
\begin{align}
\label{eq:defN}
 N(\vw) = \bigcap\nolimits_{\vx\in\RR^{|V_{\rm in}|}}\textrm{Null}\left(\frac{\partial
 f_{\vw}(\vx)}{\partial \vw}\right).
\end{align}
Again, any change to $\vw$ in the direction $\vecDelta$ that lies in
$N(\vw)$ leaves the function $f_{\vw}$ unchanged (up to $O(\|\vecDelta\|^2)$) at {\em
every} input $x$.
Therefore, if we can calculate the dimension of $N(\vw)$ and if we
have ${\rm dim}N(\vw)= |V_{\rm internal}|$, where 
we denote the number of internal nodes by $|V_{\rm internal}|$, then we can conclude that all
infinitesimal transformations to which the model is invariant can be
spanned by infinitesimal node-wise rescaling transformations.

Note that the null space $N(\vw)$ and its dimension is a function of
$\vw$. Therefore, there are some points in the parameter space that
have more invariances than other points. For example,
suppose that $v$ is an internal node with ReLU activation that receives
connections only from other ReLU units (or any unit whose output is
nonnegative).
If all the incoming weights to 
$v$ are negative including the bias, the output of node $v$ will be zero
regardless of the input, and
the function $f_{\vw}$ will be insensitive to any transformation to
the outgoing weights of $v$.
Nevertheless we conjecture that
 as the network size grows, the chance of
being in such a degenerate configuration during training will diminish exponentially.

When we study the dimension of $N(\vw)$, it is convenient to analyze the
dimension of the span of the row vectors of
the Jacobian matrix $\partial
f_{\vw}(\vx)/\partial \vw$ instead. We define the degrees of freedom of
model $G$ at $\vw$ as
 \begin{align}
  \label{eq:dof}
 d_G(\vw) = {\rm dim}\left(\bigcup\nolimits_{\vx\in\RR^{|V_{\rm in}|}}{\rm Span}\left(
\frac{\partial f_{\vw}(\vx)}{\partial \vw}[v,:] : v\in V_{\rm out}
 \right)\right),
 \end{align}
where $\partial f_{\vw}(\vx)[v,:]/\partial \vw $ denotes the $v$th
row vector of the Jacobian matrix and $\vx$ runs over all possible input $\vx$.
Intuitively, $d_{G}(\vw)$ is the dimension of the set of directions that
changes $f_{\vw}(x)$ for at least one input $x$.

Due to the rank nullity theorem $d_G(\vw)$ and the dimension of $N(\vw)$
are related as follows:
\begin{align*}
 d_G(\vw) + {\rm dim}\left(N(\vw)\right)=|E| ,
\end{align*}
where $|E|$ is the number of parameters. Therefore, again if $d_G(\vw)=|E| -
|V_{\rm internal}|$, then we can conclude that infinitesimally speaking,
all transformations to which the
model is invariant can be spanned by node-wise rescaling
transformations.

Considering only invariances that hold uniformly over all input $\vx$
could give an under-estimate of the class of invariances, i.e., there might be some invariances that
hold for many input $\vx$ but not all. An alternative approach
for characterizing invariances is to define a measure of distance
between functions that the neural network model represents based on 
the input distribution, and 
infinitesimally study the subspace of directions to which the distance
is insensitive. We can define distance between two functions $f$
and $g$ as
\begin{align*}
 D(f,g) = \EE_{\vx\sim \mathcal{D}}\left[m(f(\vx),g(\vx))\right],
\end{align*}
where $m:\RR^{|V_{\rm out}|\times |V_{\rm out}|}\rightarrow \RR$ is a
(possibly asymmetric) distance measure between two vectors
$\vz,\vz'\in\RR^{|V_{\rm out}|}$, which we require that
$m(\vz,\vz)=0$ and $\partial m/\partial \vz'_{\vz=\vz'}=0$. For example, $m(\vz,\vz')=\|\vz-\vz'\|^2$.

The second-order Taylor expansion of the distance $D$ can be written as
\begin{align*}
  D(f_{\vw}\| f_{\vw+\vecDelta}) &=\frac{1}{2}
\vecDelta\T \cdot F(\vw)\cdot
  \vecDelta
 +o(\|\vecDelta\|^2),
\end{align*}
where
\begin{align*}
 F(\vw)&=\EE_{\vx\sim \mathcal{D}}\left[\left(\frac{\partial
 f_{\vw}(\vx)}{\partial \vw}\right)\T
 \cdot\left.\frac{\partial^2
m(\vz,\vz')}{\partial \vz'^2}\right|_{\vz=\vz'=f_{\vw}(\vx)} 
\cdot\left(\frac{\partial f_{\vw}(\vx)}{\partial \vw}\right)\right]
 \end{align*}
and $\partial^2 m(\vz,\vz')/\partial \vz'^2|_{\vz=\vz'=f_{\vw}(\vx)}$ is the Hessian of
the distance measure $m$ at $\vz=\vz'=f_{\vw}(\vx)$.

Using the above expression, we can define the input distribution
dependent version of $N(\vw)$ and $d_{G}(\vw)$ as
 \begin{align*}
  N_{\mathcal{D}}(\vw) = {\rm Null} F(\vw),\qquad
 d_{G,\mathcal{D}}(\vw) = {\rm rank} F(\vw).
 \end{align*}
Again due to the rank-nullity theorem we have
$d_{G,\mathcal{D}}(\vw)+{\rm dim}(N_{\mathcal{D}}(\vw))=|E|$.

As a special case, we obtain the Kullback-Leibler
divergence $D_{\rm KL}$, which is commonly considered as {\em the} way
to study invariances, by 
choosing $m$ as the
conditional Kullback-Leibler divergence of output $y$ given the network output
as
\begin{align*}
 m(\vz,\vz') = \EE_{y\sim q(y|\vz)}\left[\log\frac{q(y|\vz)}{q(y|\vz')}\right],
\end{align*}
where $q(y|\vz)$ is a link function, which can be, e.g., the soft-max
$q(y|\vz)=e^{z_y}/\sum_{y'=1}^{|V_{\rm out}|}e^{z_{y'}}$.
However, note that the
invariances in terms of $D_{\rm KL}$ depends not only on the input
distribution but also on the choice of the link function $q(y|\vz)$.

%
%

\subsection{Path-based characterization of the network}
\label{sec:path-network}
A major challenge in studying the degrees of freedom \eqref{eq:dof}
 is the fact that the
Jacobian $\partial f_{\vw}(x)/\partial \vw$ depends on both parameter $w$
and input $x$. In this section, we first tease apart the two
dependencies by rewriting $f_{\vw}(x)$ as 
the sum over all
directed paths from every input node to each output node as follows:
 \begin{align}
 \label{eq:f-as-sum-over-paths}
 f_{\vw}(\vx)[v] &= \sum\nolimits_{p\in\Pi(v)}g_{p}(\vx)\cdot\pi_p(\vw)\cdot x[{\rm 
head}(p)],
 \end{align}
where $\Pi(v)$ is the set of all directed path from any 
input
 node to $v$, ${\rm head}(p)$ is the first node of path $p$,
  $g_{p}(\vx)$ takes 1 if all the rectified
 linear units along path $p$ is active and zero otherwise, and
$\pi_p(\vw)=\prod_{e\in E(p)} w(e)$
is the product of the weights along path $p$; $E(p)$ denotes the set 
of edges that appear along path $p$.

Let $\Pi=\cup_{v\in V_{\rm out}}\Pi(v)$ be the set of all directed paths.
We define the path-Jacobian matrix $J(\vw)\in\RR^{|\Pi|\times |E|}$
as $J(\vw)=(\partial \pi_p(\vw)/\partial w_e)_{p\in \Pi, e\in E}$.
In addition, we define
$\vphi(\vx)$ as a $|\Pi|$ dimensional vector  with $g_p(\vx)\cdot x[{\rm
head}(p)]$ in the corresponding entry.
The Jacobian of the network $f_{\vw}(\vx)$ can now be expressed as
 \begin{align}
  \label{eq:jacobian}
\frac{\partial f_{\vw}(\vx)[v]}{\partial \vw}&= J_v(\vw)\T \vphi_v(\vx),
 \end{align}
where where $J_v(\vw)$ and $\vphi_v(\vx)$ are the submatrix (or subvector) of
 $J(\vw)$ and $\vphi(\vx) $ that corresponds to output
node $v$, respectively\footnote{Note that although path activation $g_p(\vx)$ is a function of $\vw$,
it is insensitive to an infinitesimal change in the parameter, unless
the input to one of the rectified linear activation functions
along path $p$ is at exactly zero, which happens with probability
zero. Thus we treat $g_p(\vx)$ as constant here.}. Expression \eqref{eq:jacobian} 
clearly separates the dependence to the parameters $\vw$ and input $\vx$.

Now we have the following statement (the proof is given in Appendix \ref{sec:proofs}).
 \begin{theorem}
  \label{thm:dof}
The degrees-of-freedom $d_{G}(\vw)$ of neural network model $G$ is
at most the rank of the path Jacobian matrix $J(\vw)$.
 The equality holds if ${\rm dim}\left({\rm
 Span}(\vphi(\vx):\vx\in\RR^{|V_{\rm in}|})\right)=|\Pi|$; i.e. when
 the dimension of the space
 spanned by  $\vphi(\vx)$
equals the total  number of paths $|\Pi|$.
 \end{theorem}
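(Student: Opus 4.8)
The plan is to work entirely inside the row space of the path-Jacobian $J(\vw)$ and to exploit the factorization \eqref{eq:jacobian}, which cleanly separates the parameter dependence (carried by $J(\vw)$) from the input dependence (carried by $\vphi(\vx)$). Throughout I use that the directed paths partition according to their terminal output node, $\Pi = \bigsqcup_{v\in V_{\rm out}}\Pi(v)$, so that the rows of $J(\vw)$ are precisely the disjoint union of the rows of the submatrices $J_v(\vw)$, and hence $\text{rowspace}(J(\vw)) = \sum_{v\in V_{\rm out}}\text{rowspace}(J_v(\vw))$.

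First I would establish the inequality. By \eqref{eq:jacobian}, for every input $\vx$ and every output node $v\in V_{\rm out}$ the Jacobian row vector is $\partial f_{\vw}(\vx)[v]/\partial \vw = J_v(\vw)\T \vphi_v(\vx)$, which is a linear combination of the rows of $J_v(\vw)$ and therefore of the rows of $J(\vw)$. Consequently every such vector lies in $\text{rowspace}(J(\vw))$, so the span appearing in the definition \eqref{eq:dof} of $d_{G}(\vw)$ is a subspace of $\text{rowspace}(J(\vw))$, giving $d_{G}(\vw) \le \dim\text{rowspace}(J(\vw)) = \rank J(\vw)$.

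For the equality I would prove the reverse inclusion under the spanning hypothesis. Fix an output node $v$ and regard $\vphi_v(\vx)$ as the image of $\vphi(\vx)$ under the linear coordinate projection $P_v$ onto the entries indexed by $\Pi(v)$. Because image of a span equals span of images, $\text{Span}\{\vphi_v(\vx):\vx\} = P_v\big(\text{Span}\{\vphi(\vx):\vx\}\big)$; the hypothesis $\dim\text{Span}(\vphi(\vx):\vx)=|\Pi|$ makes the argument all of $\RR^{|\Pi|}$, so its projection is all of $\RR^{|\Pi(v)|}$. Applying $J_v(\vw)\T$ then gives $\text{Span}\{J_v(\vw)\T\vphi_v(\vx):\vx\} = J_v(\vw)\T\RR^{|\Pi(v)|} = \text{rowspace}(J_v(\vw))$. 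Taking the union over $v\in V_{\rm out}$ and using the path partition once more, the collection of all Jacobian row vectors spans $\sum_{v}\text{rowspace}(J_v(\vw)) = \text{rowspace}(J(\vw))$, whence $d_{G}(\vw) = \rank J(\vw)$.

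The step I would treat as the crux is the commutation of span with the coordinate projection: it is exactly the fact that $\text{Span}\{\vphi_v(\vx):\vx\}$ fills the entire coordinate block $\RR^{|\Pi(v)|}$ that upgrades ``the Jacobian vectors lie in the row space'' to ``they span it,'' and this is where the full-dimensionality assumption on $\{\vphi(\vx)\}$ is used. I would also note in passing, as flagged in the footnote to \eqref{eq:jacobian}, that the path activations $g_p(\vx)$ are treated as locally constant because they change only on a measure-zero set of inputs, so that $\vphi(\vx)$ is a well-defined function of $\vx$ and its span is meaningful.
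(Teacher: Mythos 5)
Your proposal is correct and takes essentially the same route as the paper: the upper bound follows from the factorization \eqref{eq:jacobian} placing every Jacobian row vector in the row space of $J(\vw)$, and equality follows from the full-dimensionality of ${\rm Span}(\vphi(\vx):\vx)$ forcing the image under $J(\vw)\T$ to be the entire row space. Your block-by-block projection onto the coordinates $\Pi(v)$ is a slightly more careful rendering of the paper's one-line identity \eqref{eq:span} (which, read literally, equates the span of the individual vectors $J_v(\vw)\T\vphi_v(\vx)$ with the span of their sums $J(\vw)\T\vphi(\vx)$), but the substance of the argument is the same.
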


An analogous statement holds for the input distribution dependent
degrees of freedom $d_{G,\mathcal{D}}(\vw)$, namely,
$d_{G,\mathcal{D}}(\vw)\leq {\rm rank} J(\vw)$ and the equality holds if
the rank of the $|\Pi|\times |\Pi|$ path covariance matrix
$(\EE_{\vx\sim\mathcal{D}}\left[
\partial^2 m(\vz,\vz')/\partial z'_v\partial z'_{v'}\phi_{p}(\vx)\phi_{p'}(\vx)
\right])_{p,p'\in \Pi}$ is full, where $v$ and $v'$ are the end nodes of
paths $p$ and $p'$, respectively.

It remains to be understood when the dimension of the span of the path
vectors $\vphi(\vx)$ become full. The answer depends on
$\vw$. Unfortunately, there is no typical behavior as we know from the
example of an internal ReLU unit connected to ReLU units by negative
weights. In fact, we can choose any number of internal units in the network to be
in this degenerate state creating different degrees of degeneracy.
Another way to introduce degeneracy is to insert a linear layer in the
network. This will superficially increase the number of paths but will
not increase the dimension of the span of $\vphi(\vx)$. 
For example,
consider a linear classifier $z_{\rm out}=\inner{\vw}{\vx}$ with $|V_{\rm in}|$ inputs. If
the whole input space is spanned by $\vx$, the dimension of the span of
$\vphi(\vx)$ is $|V_{\rm in}|$, which agrees with the number of paths. Now let's insert a linear layer with units
$V_1$ in between the input and the output layers. The number of paths has increased from $|V_{\rm in}|$ to $|V_{\rm
in}|\cdot|V_1|$. However the dimension of the span of $\vphi(\vx)={\vec
1}_{|V_1|}\otimes \vx$ is still $|V_{\rm in}|$, because the linear units
are always active.
Nevertheless we conjecture that 
there is a configuration $\vw$ such that ${\rm dim}\left({\rm
 Span}(\vphi(\vx):\vx\in\RR^{|V_{\rm in}|})\right)=|\Pi|$ and the set of
 such $\vw$ grows as the network becomes larger.

%
%
%
%
%

\subsection{Combinatorial characterization of the rank of path Jacobian} 
 Finally, we show that the rank of the path-Jacobian matrix $J(\vw)$ is
determined purely combinatorially by the graph $G$ 
except a subset of the parameter space with zero Lebesgue measure. The
proof is given in Appendix \ref{sec:proofs}.

 \begin{theorem}
  \label{thm:rank-path-jacobian}
The rank of the path Jacobian matrix $J(\vw)$ is generically (excluding set
  of parameters with zero Lebesgue measure) equal to the number of
 parameters $|E|$ minus the number of internal nodes of the network.
 \end{theorem}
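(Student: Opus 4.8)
The plan is to strip away all dependence on $\vw$ by factoring $J(\vw)$ through a purely combinatorial $0/1$ matrix, and then to compute the rank of that matrix by identifying its null space explicitly. For any $\vw$ whose coordinates are all nonzero we have $\partial \pi_p(\vw)/\partial w_e = \pi_p(\vw)/w_e$ when $e\in E(p)$ and $0$ otherwise, so
\[
J(\vw) = \diag\big(\pi_p(\vw)\big)_{p\in\Pi}\; B\; \diag(w_e)_{e\in E}^{-1},
\]
where $B\in\{0,1\}^{|\Pi|\times|E|}$ is the path--edge incidence matrix, $B_{p,e}=1$ iff $e\in E(p)$. Since $\pi_p(\vw)=\prod_{e\in E(p)}w_e\neq 0$ and each $w_e\neq 0$, both diagonal factors are invertible, hence $\rank J(\vw)=\rank B$ for every $\vw$ in the open, dense region where no weight vanishes (its complement has Lebesgue measure zero). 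So the generic rank is weight-independent, and it suffices to prove $\rank B = |E|-|V_{\rm internal}|$.

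Next I would reinterpret $\rank B$ via its null space: $B\xi=0$ means $\sum_{e\in E(p)}\xi_e=0$ for every directed input--output path $p$. I claim this null space equals the space of ``discrete gradients'' $\xi_{u\to v}=s_v-s_u$ coming from node potentials $s:V\to\RR$ that vanish on $V_{\rm in}\cup V_{\rm out}$, which is precisely the span of the infinitesimal node-wise rescaling directions of Section~\ref{sec:node-rescaling}. The easy inclusion is that each such $\xi$ is null: along a path the sum telescopes to $s_{\rm out}-s_{\rm in}=0$. These directions are independent, so $\dim\mathrm{Null}(B)\ge |V_{\rm internal}|$: if $s_v-s_u=0$ on every edge then $s$ is constant along any input--output path and hence $s\equiv 0$ by the boundary condition, since every internal node lies on such a path. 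This already gives $\rank B\le |E|-|V_{\rm internal}|$.

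The main obstacle is the reverse inclusion: that \emph{every} $\xi$ with zero sum along all input--output paths is a discrete gradient. The idea is to define a candidate potential $s_v$ as the $\xi$-sum along \emph{any} directed path from an input to $v$, and to prove well-definedness. Writing $\sum_{q}\xi$ for $\sum_{e\in E(q)}\xi_e$, given two input-to-$v$ paths $q_1,q_2$ I append a common directed tail $r$ from $v$ to some output---which exists because, in a DAG with $V_{\rm in}$ the sources and $V_{\rm out}$ the sinks, every node reaches a sink. Then $q_1 r$ and $q_2 r$ are full input--output paths, so $\sum_{q_1}\xi+\sum_r\xi=0=\sum_{q_2}\xi+\sum_r\xi$, forcing $\sum_{q_1}\xi=\sum_{q_2}\xi$. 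With $s_v$ well defined, $s$ vanishes on inputs (empty path) and on outputs (the zero-sum condition itself), and reading the definition across a single edge $(u\to v)$ gives $\xi_{u\to v}=s_v-s_u$. Hence $\dim\mathrm{Null}(B)=|V_{\rm internal}|$ exactly.

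Combining the two bounds yields $\rank B=|E|-|V_{\rm internal}|$, and therefore $\rank J(\vw)=|E|-|V_{\rm internal}|$ off the measure-zero set where some weight vanishes, which proves the theorem. I expect the only delicate point to be the well-definedness of $s_v$, which rests on the reachability properties inherent in the DAG structure (every node is reachable from a source and reaches a sink); the remaining steps are the diagonal factorization and elementary telescoping.
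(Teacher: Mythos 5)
Your proposal is correct, and it splits into the same two stages as the paper's proof but handles the second stage in a genuinely different way. The first stage is identical: for $\vw$ with no zero coordinate you factor $J(\vw)$ as a product of two invertible diagonal matrices sandwiching the $0/1$ path--edge incidence matrix, so the generic rank of $J(\vw)$ equals the rank of that combinatorial matrix (the paper writes this as $\diag(\vw^{-1})\cdot M\cdot\diag(\vpi(\vw))$ with $M=B^\top$). Where you diverge is in computing that rank. The paper counts linearly independent \emph{columns} (paths) by a layer-by-layer induction on a fully-connected layered network, exhibiting explicit four-path dependencies of the form $(u,v)\!\to\! p = (u,v)\!\to\! p_0 - (u_0,v)\!\to\! p_0 + (u_0,v)\!\to\! p$ to show each additional node past the first contributes only linearly many new independent paths. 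You instead compute $\dim\mathrm{Null}(B)$ on the edge side and identify the null space exactly with discrete gradients of node potentials vanishing on $V_{\rm in}\cup V_{\rm out}$, via the telescoping inclusion one way and the append-a-common-tail well-definedness argument the other way. Your route buys two things: it works for an arbitrary DAG in which every node is reachable from a source and reaches a sink (the paper's induction is written only for layered fully-connected architectures), and it makes the conceptual point of the theorem transparent, since the null space is manifestly the span of the infinitesimal node-wise rescaling directions rather than that fact being read off afterwards from a dimension count. The paper's induction, in exchange, gives a concrete enumeration of which paths form a basis. The only thing you should state explicitly as a standing hypothesis is the reachability condition you invoke (every internal node lies on some input--output path); the paper needs the same assumption and it is where degenerate architectures would break the count.
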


Note that the dimension of the space spanned by node-wise rescaling
\eqref{eq:node-wise} equals the number of internal nodes. Therefore,
node-wise rescaling is the {\em only} type of invariance for a ReLU
network with fixed architecture $G$, if ${\rm dim}\left({\rm
Span}(\phi(\vx):\vx\in\RR^{|V_{\rm in}|})\right)=|\Pi|$ at parameter $\vw$.
 
As an example, let us consider a simple 3 layer network with 2 nodes in each layer
except for the output layer, which has only 1 node (see 
Figure~\ref{fig:net2221}). The network has 10
parameters (4, 4, and 2 in each layer respectively) and 8
paths. The Jacobian $(\partial f_{\vw}(\vx)/\partial \vw)$ can be written as
$(\partial f_{\vw}(\vx)/\partial \vw)
= J(\vw)\T\cdot \vphi(\vx)$, where
\begin{align}
 \label{eq:J-2221}
 J(\vw) &=
 \left[
 \begin{array}{c|c|c}
\begin{array}{cccc}
 w_5 w_9 & & & \\
 & w_5w_9 & & \\
 & & w_6w_9 & \\
 & & & w_6w_9 \\
 \hline
 w_7w_{10} & & & \\
 & w_7w_{10} & & \\
 & & w_8w_{10} & \\
 & & & w_8w_{10} \\
\end{array}
&
\begin{array}{cccc}
w_9w_1 & & & \\
w_9w_2 & & & \\
 & w_9w_3 & & \\
 & w_9w_4 & &\\
 \hline
  & & w_{10}w_1 & \\
 & & w_{10}w_2 & \\
 & & & w_{10}w_3 \\
 & & & w_{10}w_4
\end{array}
& \begin{array}{cc}
 w_5w_1 & \\
 w_5w_2 & \\
 w_6w_3 & \\
 w_6w_4 & \\
\hline
 & w_7w_1 \\
 & w_7w_2 \\
 & w_8w_3 \\
 & w_8w_4
\end{array}
 \end{array}
\right]
 \intertext{and}
 \phi(\vx)^\top&=
\begin{bmatrix}
 g_1(\vx)x[1] & g_2(\vx)x[2]  & g_3(\vx)x[1] & g_4(\vx)x[2] &  g_5(\vx)x[1] & 
g_6(\vx)x[2]&  g_7(\vx)x[1] &  g_8(\vx)x[2] 
\end{bmatrix}.\notag
\end{align}
The rank of $J(\vw)$ in \eqref{eq:J-2221} is (generically) equal to $10-4=6$, which is smaller
than both the number of parameters and the number of paths.
 \begin{figure}[htb]
\begin{center}
 \includegraphics[clip,width=.28\textwidth]{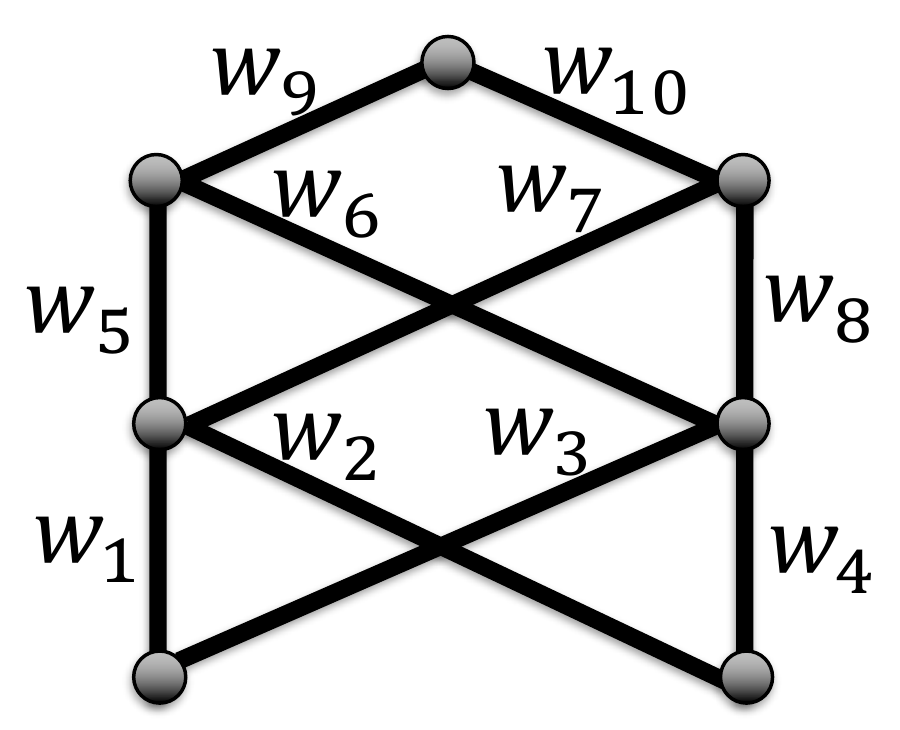}
\end{center}
\caption{A 3 layer network with 10 parameters and 8 paths.}
\label{fig:net2221}
 \end{figure}

\section{Conclusion and Future Work}
We proposed a unified framework as a complexity measure or 
regularizer for neural networks and discussed normalization and 
optimization with respect to this regularizer. We further showed how 
this measure interpolates between data-dependent and data-independent 
regularizers and discussed how Path-SGD and Batch-Normalization are 
special cases of optimization with respect to this measure. We also 
looked at the issue of invariances and brought new insights to this 
area.
\bibliographystyle{iclr2016_conference}
\bibliography{iclr2016}

\begin{thebibliography}{19}
\providecommand{\natexlab}[1]{#1}
\providecommand{\url}[1]{\texttt{#1}}
\expandafter\ifx\csname urlstyle\endcsname\relax
  \providecommand{\doi}[1]{doi: #1}\else
  \providecommand{\doi}{doi: \begingroup \urlstyle{rm}\Url}\fi

\bibitem[Amari(1998)]{Ama98}
Amari, Shun-Ichi.
\newblock Natural gradient works efficiently in learning.
\newblock \emph{Neural computation}, 10\penalty0 (2):\penalty0 251--276, 1998.

\bibitem[Desjardins et~al.(2015)Desjardins, Simonyan, Pascanu, and
  Kavukcuoglu]{desjardins2015natural}
Desjardins, Guillaume, Simonyan, Karen, Pascanu, Razvan, and Kavukcuoglu,
  Koray.
\newblock Natural neural networks.
\newblock \emph{arXiv preprint arXiv:1507.00210}, 2015.

\bibitem[Glorot \& Bengio(2010)Glorot and Bengio]{difficulty}
Glorot, Xavier and Bengio, Yoshua.
\newblock Understanding the difficulty of training deep feedforward neural
  networks.
\newblock In \emph{AISTATS}, 2010.

\bibitem[Grosse \& Salakhudinov(2015)Grosse and
  Salakhudinov]{grosse2015scaling}
Grosse, Roger and Salakhudinov, Ruslan.
\newblock Scaling up natural gradient by sparsely factorizing the inverse
  {Fisher} matrix.
\newblock In \emph{ICML}, 2015.

\bibitem[Ioffe \& Szegedy(2015)Ioffe and Szegedy]{IofSze15}
Ioffe, Sergey and Szegedy, Christian.
\newblock Batch normalization: Accelerating deep network training by reducing
  internal covariate shift.
\newblock In \emph{ICML}, 2015.

\bibitem[Larochelle et~al.(2009)Larochelle, Bengio, Louradour, and
  Lamblin]{larochelle2009exploring}
Larochelle, Hugo, Bengio, Yoshua, Louradour, J{\'e}r{\^o}me, and Lamblin,
  Pascal.
\newblock Exploring strategies for training deep neural networks.
\newblock \emph{The Journal of Machine Learning Research}, 10:\penalty0 1--40,
  2009.

\bibitem[{Le Cun} et~al.(1998){Le Cun}, Bottou, Orr, and
  M{\"{u}}ller]{lecun-98x}
{Le Cun}, Yann, Bottou, L\'{e}on, Orr, Genevieve~B., and M{\"{u}}ller,
  Klaus-Robert.
\newblock Efficient backprop.
\newblock In \emph{Neural Networks, Tricks of the Trade}, Lecture Notes in
  Computer Science LNCS~1524. Springer Verlag, 1998.
\newblock URL \url{http://leon.bottou.org/papers/lecun-98x}.

\bibitem[LeCun et~al.(1998)LeCun, Bottou, Orr, and Muller]{lecun1998neural}
LeCun, Yann, Bottou, Leon, Orr, Genevieve~B, and Muller, Klaus-Robert.
\newblock Neural networks-tricks of the trade.
\newblock \emph{Springer Lecture Notes in Computer Sciences}, 1524\penalty0
  (5-50):\penalty0 7, 1998.

\bibitem[Martens(2010)]{martens2010deep}
Martens, James.
\newblock Deep learning via hessian-free optimization.
\newblock In \emph{ICML}, 2010.

\bibitem[Martens \& Grosse(2015)Martens and Grosse]{martens2015optimizing}
Martens, James and Grosse, Roger.
\newblock Optimizing neural networks with {Kronecker-factored} approximate
  curvature.
\newblock In \emph{ICML}, 2015.

\bibitem[Neyshabur et~al.(2015{\natexlab{a}})Neyshabur, Salakhutdinov, and
  Srebro]{NeySalSre15}
Neyshabur, Behnam, Salakhutdinov, Ruslan, and Srebro, Nathan.
\newblock {Path-SGD}: Path-normalized optimization in deep neural networks.
\newblock In \emph{NIPS}, 2015{\natexlab{a}}.

\bibitem[Neyshabur et~al.(2015{\natexlab{b}})Neyshabur, Tomioka, and
  Srebro]{NeyTomSre15}
Neyshabur, Behnam, Tomioka, Ryota, and Srebro, Nathan.
\newblock Norm-based capacity control in neural networks.
\newblock In \emph{COLT}, 2015{\natexlab{b}}.

\bibitem[Neyshabur et~al.(2015{\natexlab{c}})Neyshabur, Tomioka, and
  Srebro]{neyshabur15b}
Neyshabur, Behnam, Tomioka, Ryota, and Srebro, Nathan.
\newblock In search of the real inductive bias: On the role of implicit
  regularization in deep learning.
\newblock \emph{International Conference on Learning Representations (ICLR)
  workshop track}, 2015{\natexlab{c}}.

\bibitem[Ollivier(2015)]{ollivier2015riemannian}
Ollivier, Yann.
\newblock Riemannian metrics for neural networks i: feedforward networks.
\newblock \emph{Information and Inference}, 4\penalty0 (2):\penalty0 108--153,
  2015.

\bibitem[Pascanu \& Bengio(2014)Pascanu and Bengio]{pascanu2013revisiting}
Pascanu, Razvan and Bengio, Yoshua.
\newblock Revisiting natural gradient for deep networks.
\newblock In \emph{ICLR}, 2014.

\bibitem[Roux et~al.(2008)Roux, Manzagol, and Bengio]{roux2008topmoumoute}
Roux, Nicolas~L, Manzagol, Pierre-Antoine, and Bengio, Yoshua.
\newblock Topmoumoute online natural gradient algorithm.
\newblock In \emph{NIPS}, 2008.

\bibitem[Schaul et~al.(2013)Schaul, Zhang, and Lecun]{schaul2013no}
Schaul, Tom, Zhang, Sixin, and Lecun, Yann.
\newblock No more pesky learning rates.
\newblock In \emph{ICML}, 2013.

\bibitem[Sutskever et~al.(2013)Sutskever, Martens, Dahl, and
  Hinton]{sutskever2013importance}
Sutskever, Ilya, Martens, James, Dahl, George, and Hinton, Geoffrey.
\newblock On the importance of initialization and momentum in deep learning.
\newblock In \emph{ICML}, 2013.

\bibitem[Vinyals \& Povey(2011)Vinyals and Povey]{vinyals2011krylov}
Vinyals, Oriol and Povey, Daniel.
\newblock Krylov subspace descent for deep learning.
\newblock In \emph{ICML}, 2011.

\end{thebibliography}
\appendix

\section{Implementation}
\subsection{DDP-Normalization}\label{sec:BN-imp}
Given any batch of $n$ data points to estimate mean, variance and the 
gradient, the stochastic gradients for the weight $\tilde{\vecW}$ (weights 
in the DDP-Normalized network) can then be calculated through the 
chain rule:
\begin{align}\label{eq:dl}
\frac{\partial L}{\partial \tilde{\vecW}_{\rightarrow v}} 
&=\frac{1}{n\tilde{\gamma}_v}\sum_{i=1}^{n} \frac{\partial L}{\partial 
z_{v}^{(i)}}\left[ {\vecH}_{N^{\In}(v)}^{(i)} - \frac{1}{n} 
\sum_{j=1}^n {\vecH}_{N^{\In}(v)}^{(j)} -\frac{
\hat{z}^{(i)}_v }{ 
2\tilde{\gamma}_v^2}\frac{\partial \tilde{\gamma}^2_v}{\partial \tilde{\vecW}_{\rightarrow 
v}} \right]\\
\frac{\partial L}{\partial z^{(i)}_{u}} &= \frac{1}{\tilde{\gamma}_v} 
\left[\sum_{v \in N^{\Out}(u)} \tilde{w}_{u\rightarrow 
v}\left(\frac{\partial L}{\partial z_{v}^{(i)}} - 
\frac{1}{n}\sum_{j=1}^n\frac{\partial L}{\partial 
z_{v}^{(j)}}\left(1-\alpha\frac{\hat{z}^{(i)}_v\hat{z}^{(j)}_v}{\tilde{\gamma}^2_v}\right)\right)\right]_{z^{(i)}_{u}\geq 0}\\ \notag
\end{align}
where $\hat{z}^{(i)}_{v} = \tilde{z}^{(i)}_{v} - \frac{1}{n} 
\sum_{j=1}^n \tilde{z}^{(j)}_{v}$ and we have:
\begin{equation}\label{eq:dgam}
\frac{\partial \tilde{\gamma}^2_v}{\partial \tilde{\vecW}_{\rightarrow 
v}}=2(1-\alpha)\tilde{\vecW}_{\rightarrow v} + 
\frac{2\alpha}{n}\sum_{i=1}^n \hat{z}^{(i)}_v \left({\vecH}_{N^{\In}(v)}^{(i)} - \frac{1}{n} 
\sum_{j=1}^n {\vecH}_{N^{\In}(v)}^{(j)}\right)
\end{equation}
Similar to Batch-Normalization, all the above calculations can be efficiently 
carried out as vector operations with negligible extra memory and computations.
\subsection{DDP-SGD}\label{sec:ddp-imp}
In order to compute the second derivatives 
$\kappa_e(\vecW)=\frac{\partial^2 \gamma^2_{\rm net}}{\partial w^2_e}$, 
we first calculate the first derivative. The backpropagation can be 
done through $\gamma^2_u$ and  $z^{(i)}_{u}$ but this makes it 
difficult to find the second derivatives. Instead we propagate the 
loss through $\gamma^2_u$ and the second order terms of the form 
$z^{(i)}_{u_1}z^{(i)}_{u_2}$:
\begin{align}
\frac{\partial \gamma^2_{{\rm net}}}{\partial \gamma^2_u} &= 
(1-\alpha)\sum_{v\in N^\Out(u)} \frac{\partial \gamma^2_{{\rm 
net}}}{\partial \gamma^2_v} w^2_{u\rightarrow v}
\end{align}
\begin{equation}
\frac{\partial \gamma^2_{{\rm net}}}{\partial 
(z_{u_1}^{(i)}z_{u_2}^{(i)})} = \alpha \left[\frac{ \partial 
\gamma^2_{\rm net}}{\partial \gamma^2_{u_1}}\right]_{u_1=u_2} + 
\left[\sum_{(v_1,v_2)\in \left(N^\Out(u_1)\right)^2} \frac{\partial 
\gamma^2_{\rm net}}{\partial 
(z_{v_1}^{(i)}z_{v_2}^{(i)})}w_{u_1\rightarrow v_1} w_{u_2 
\rightarrow v_2} \right]_{z^{(i)}_{u_1}>0, z^{(i)}_{u_2}>0}
\end{equation}
Now we can calculate the partials for $w_{u\rightarrow v}$ as follows:
\begin{equation}
\frac{\partial \gamma^2_{\rm net}}{\partial w_{u\rightarrow v}} = 
2(1-\alpha)\frac{\partial \gamma^2_{\rm net}}{\partial \gamma^2_v} 
\gamma^2_u w_{u\rightarrow v} + 2\sum_{i=1}^n\sum_{v'\in N^\Out(u)} 
\frac{\partial \gamma^2_{\rm net}}{\partial (z^{(i)}_v z^{(i)}_{v'})} 
h_{u}^{(i)}z_{v'}^{(i)}
\end{equation}
Since the partials $\frac{\partial \gamma^2_{{\rm net}}}{\partial 
\gamma^2_u}$ and $\frac{\partial \gamma^2_{{\rm net}}}{\partial 
(z_{u_1}^{(i)}z_{u_2}^{(i)})}$ do not depend on $w_{u\rightarrow v}$, 
the second order derivative can be calculated directly:
\begin{equation}
\kappa_{u\rightarrow v}(\vecW)=\frac{1}{2}\frac{\partial^2 \gamma^2_{\rm 
net}}{\partial w_{u\rightarrow v}^2} = (1-\alpha)\frac{\partial 
\gamma^2_{\rm net}}{\partial \gamma^2_v} \gamma^2_u + 
\sum_{i=1}^n\frac{\partial \gamma^2_{\rm net}}{\partial 
\left({z^{(i)}_v}^2\right)}\left(h^{(i)}_u\right)^2
\end{equation}

\section{Natural Gradient}\label{sec:ng}
The natural gradient algorithm \citep{Ama98} achieves invariance 
by applying the inverse of the Fisher information
matrix $F(\vw^{(t)})$ at the current parameter $\vw^{(t)}$ to the 
negative gradient direction
as follows:
 \begin{align}
 \vw^{(t+1)} &=  \vw^{(t)} + \eta\vecDelta^{(\rm natural)},\notag
 \intertext{where}
 \label{eq:ng-argmin}
 \Delta^{(\rm natural)} &=
   \argmin{\Delta\in\RR^{|E|}}
  \inner{-\frac{\partial L}{\partial w}(\vw^{(t)})}{\vecDelta}
  ,\quad {\rm s.t.}\quad \vecDelta\T F(\vw^{(t)})\vecDelta\leq \delta^2\\
 \label{eq:ng}
&=-F^{-1}(\vw^{(t)})\frac{\partial L}{\partial w}(\vw^{(t)}).
 \end{align}
Here $F(\vw)$ is the Fisher information matrix at point $\vw$ and is
defined with respect to the probabilistic view of the feedforward neural network
model, which we describe in more detail below.

Suppose that we are solving a classification problem and the final 
layer
of the network is fed into a softmax layer that determines the
probability of candidate classes given the input $x$. Then the neural
network with the softmax layer can be viewed as a conditional
probability distribution
 \begin{align}
  \label{eq:cond-prob}
 q(y|\vx)= \frac{\exp(f_{\vecW}(\vx)[v_y])}{\sum_{v\in  V_{\rm 
out}}\exp(f_{\vecW}(\vx)[v])},
 \end{align}
where $v_y$ is the output node corresponding to class $y$.
If we are solving a regression problem a Gaussian distribution is
probably more appropriate for $q(y|\vx)$.

Given the conditional probability distribution $q(y|\vx)$, the Fisher
information matrix can be defined as follows:
 \begin{align}
  \label{eq:fisher-information}
 F(\vw)[e,e'] = \EE_{\vx\sim p(\vecX)}\EE_{y\sim q(y|\vx)}\left[
\frac{\partial \log q(y|\vx)}{\partial w_e}
\frac{\partial \log q(y|\vx)}{\partial w_{e'}}
 \right],
 \end{align}
where $p(x)$ is the marginal distribution of the data.

Since we have
 \begin{align}
  \label{eq:partial-logq}
 \frac{\partial \log q(y|\vx)}{\partial w_{u\rightarrow v}}
 =\frac{\partial \log q(y|\vx)}{\partial z_v}\cdot h_u
 =\sum_{v'\in V_{\rm out}}
 \frac{\partial \log q(y|\vx)}{\partial z_{v'}}\cdot
 \frac{\partial z_{v'}}{\partial z_{v}}\cdot h_u
 \end{align}
using the chain rule, each entry of the Fisher information matrix can 
be
computed efficiently by forward and backward propagations on a 
minibatch.

\removed{
 \subsection{The Fisher information and invariances}
\label{sec:fisher}
Using the path-based definition of the network model
\eqref{eq:f-as-sum-over-paths}, we can rewrite
the Fisher information matrix \eqref{eq:fisher-information}
as follows
\begin{align*}
 F(\vw)= J(\vw)\T\cdot C(\vw)\cdot J(\vw),
\end{align*}
where $J(\vw)=(\partial \pi_p(\vw)/\partial w_e)\in\RR^{|\Pi|\times|E|}$ is the path-Jacobian
matrix we defined in Section \ref{sec:path-network}, and 
the (uncentered) {\em path-covariance matrix} $C(\vw)$ is defined as 
follows:
 \begin{align}
 \label{eq:path-cov}
 C(\vw)[p,p']= \EE_{x\sim \mathcal{D}}\EE_{y\sim q(y|\vx)}\left[
 \frac{\partial \log q(y|\vx)}{\partial z_v}\cdot
   \frac{\partial \log q(y|\vx)}{\partial z_{v'}}
\cdot \phi_p(\vx)\cdot \phi_{p'}(\vx)
  \right],
 \end{align}
where $v$ and $v'$ are the end nodes of paths $p$ and $p'$,
respectively, and $\phi_p(\vx)=g_p(\vx)\cdot x[{\rm head}(p)]$ as
defined in Section \ref{sec:path-network}.

Consider a simple fully-connected two layer network with one output
node, two hidden units, and two input units (see Figure 
\ref{fig:net221}). There are four paths and the path Jacobian
matrix $J(\vw)$ can be written as follows:
\begin{align*}
 J(\vw) =
 \begin{bmatrix}
  w_5 &  0  &  0  &  0  & w_1 &  0\\
   0  & w_5 &  0  &  0  & w_2 &  0\\
   0  & 0   & w_6 &  0  &  0  & w_3\\
   0  & 0   &  0  & w_6 &  0  & w_4\\
 \end{bmatrix}
\end{align*}
We consider two cases, in the first case, the hidden units have the
rectified linear activation function and the path covariance matrix 
can
be written as
 \begin{align}
  \label{eq:path-covariance-relu-221}
 C(\vw)=\EE_{x\sim \mathcal{D}}\EE_{y\sim q(y|\vx)}\left[
 \left(\frac{\partial \log q(y|\vx)}{\partial z_{\rm out}}\right)^2
 \cdot
 \begin{pmatrix}
  g_1^2(\vx) &  g_1(\vx)g_2(\vx)\\
   g_1(\vx)g_2(\vx) & g_2^2(\vx)
 \end{pmatrix}
 \otimes
 \begin{pmatrix}
  (x[1])^2 & x[1]x[2] \\
  x[1]x[2] & (x[2])^2
 \end{pmatrix}
 \right],
 \end{align}
 where $\otimes$ denotes the Kronecker product and $g_i(\vx)$ takes one
 if the $i$th  hidden unit is active and
 zero otherwise, for $i=1,2$. Note that the first two
paths and the last two paths each share the same hidden unit. 
Therefore
the path covariance matrix (before expectation) can be written as the
Kronecker product of the (uncentered) covariance of the activations of
the two hidden units and the (uncentered) covariance of the input $x$.

Generally speaking the rank of the Kronecker product of two matrices 
is
the product of their ranks. Typically the rank of the path covariance 
matrix
\eqref{eq:path-covariance-relu-221} is 4 if the input covariance 
matrix
is not degenerate and the connections to the two hidden units are not
identical, although we cannot take the expectation of the two matrices
independently. Since all the rows of the path-Jacobian matrix are
linearly independent, the rank of the Fisher information matrix for 
this
network with the rectified linear activation is 4. Thus the number of
continuously deformable invariances is 2 (equal to the number of
internal nodes).

In the second case, the hidden units have the linear activation 
function
$\sigma_j(z)=z$. Therefore $g_j(\vx)=1$ regardless of the input $x$.Thus
the path covariance matrix simplifies to
\begin{align}
\label{eq:path-covariance-linear-221}
 C(\vw)=\EE_{x\sim \mathcal{D}}\EE_{y\sim q(y|\vx)}\left[
 \left(\frac{\partial \log q(y|\vx)}{\partial h_{\rm out}}\right)^2
 \cdot
 \begin{pmatrix}
  1 & 1\\ 1 & 1
 \end{pmatrix}
 \otimes
 \begin{pmatrix}
  (x[1])^2 & x[1]x[2] \\
  x[1]x[2] & (x[2])^2
 \end{pmatrix}
 \right],
 \end{align}
Now even if the input covariance matrix has full rank, the rank of the
path covariance matrix \eqref{eq:path-covariance-linear-221}
and the Fisher information matrix cannot be
larger than 2. This is expected, because if the hidden units are 
linear,
the network is equivalent to a network with one output unit directly
connected to the two input units.

As we have seen in the above example, the rank of the Fisher 
information
captures the degrees-of-freedom of the model
in a parameterization independent but {\em source distribution
dependent} manner. It is sensitive to both the invariance induced by 
the
network structure (captured in the path Jacobian matrix $J(\vw)$) and the
correlation of the paths (captured in the path covariance matrix $C(\vw)$).
}
\section{Proofs}
\label{sec:proofs}
\begin{proof}[Proof of Theorem \ref{thm:orthogonal}]
First note that we can calculate the following inner product using equation~\eqref{eq:dgam}:
\begin{align*}
\inner{\tilde{\vecW}_{\rightarrow v}}{\frac{ \partial \tilde{\gamma}_v^2}{ \partial \tilde{\vecW}_{\rightarrow v}}}
&= 2(1-\alpha)\norm{ \tilde{\vecW}_{\rightarrow v} }_2^2 + 
\frac{2\alpha}{n}\sum_{i=1}^n (\hat{z}^{(i)})^2 \\
&= 2(1-\alpha)\norm{ \tilde{\vecW}_{\rightarrow v} }_2^2 + 
2\alpha \var(\tilde{z}_v) = 2\tilde{\gamma}_v^2
\end{align*}
Next, by equation \eqref{eq:dl} we get:
\begin{align*}
\inner{ \tilde{\vecW}_{\rightarrow v}}{ \frac{\partial L}{\partial \tilde{\vecW}_{\rightarrow v}} }
&=\frac{1}{n\tilde{\gamma}_v}\sum_{i=1}^{n} \frac{\partial L}{\partial 
z_{v}^{(i)}}\left[ \hat{z}^{(i)}_v -\frac{
\hat{z}^{(i)}_v }{ 
2\tilde{\gamma}_v^2}\inner{\tilde{\vecW}_{\rightarrow v} }{ \frac{\partial \tilde{\gamma}^2_v}{\partial \tilde{\vecW}_{\rightarrow 
v}}} \right]\\
&= \frac{1}{n\tilde{\gamma}_v}\sum_{i=1}^{n} \frac{\partial L}{\partial 
z_{v}^{(i)}}\left[ \hat{z}^{(i)}_v -\frac{
\hat{z}^{(i)}_v }{ 
2\tilde{\gamma}_v^2} 2 \tilde{\gamma}^2_v \right] = 0
\end{align*}
\end{proof}
\begin{proof}[Proof of Theorem \ref{thm:dof}]
First we see that \eqref{eq:jacobian} is true because
 \begin{align*}
\frac{\partial f_{\vw}(\vx)[v]}{\partial \vw} =\Bigl(\sum_{p\in\Pi(v)}
\frac{\partial \pi_p(\vw)}{\partial w_e}
 \cdot g_{p}(\vx)\cdot x[{\rm head}(p)]\Bigr)_{e\in E}
 = J_{v}(\vw)\T\cdot \phi_{v}(\vx).
\end{align*}
 Therefore, 
 \begin{align}
\bigcup_{\vx\in\RR^{|V_{\rm in}|}} {\rm Span}\left(
\frac{\partial f_{\vw}(\vx)[v]}{\partial \vw}: v\in V_{\rm out}
 \right)
&= \bigcup_{\vx\in\RR^{|V_{\rm in}|}}{\rm Span}\left(
J_{v}(\vw)\T\cdot \phi_{v}(\vx): v\in V_{\rm out}
  \right)\notag\\
  \label{eq:span}
 &=J(\vw)\T \cdot {\rm Span}\left(\phi(\vx): \vx\in\RR^{|V_{\rm in}|}\right).
\end{align}
Consequently,  any vector of the form $(\frac{\partial
f_{\vw}(\vx)[v]}{\partial w_e})_{e\in E}$ for a fixed input $\vx$ lies in the
 span of the row vectors of the path Jacobian $J(\vx)$.

The second part says $d_{G}(\vw)={\rm rank}J(\vw)$ if ${\rm
dim}\left({\rm Span}(\phi(\vx):\vx\in\RR^{|V_{\rm in}|})\right)=|\Pi|$, which is the number
of rows of $J(\vw)$. We can see that this is true from expression \eqref{eq:span}.

\end{proof}

\begin{proof}[Proof of Theorem \ref{thm:rank-path-jacobian}]
First, $J(\vw)$  can be written as an Hadamard product between path 
incidence
 matrix $M$ and a rank-one matrix as follows:
\begin{align*}
  J(\vw) &= M \circ \left(\vw^{-1} \cdot \vpi^\top(\vw)\right),
\end{align*}
 where $M$ is the path incidence matrix whose $i,j$ entry is one if
the $i$th edge is part of the $j$th path,  $\vw^{-1}$ is an entry-wise
 inverse of the parameter vector $\vw$, 
$\vpi(\vw)=(\pi_p(\vw))$ is a vector containing the product along each 
path in
 each entry, and $\top$ denotes transpose.

 Since we can rewrite
\begin{align*}
  J(\vw) &= {\rm diag}(\vw^{-1})\cdot M \cdot{\rm diag}(\vpi(\vw)),
\end{align*}
 we see that (generically) the rank of $J(\vw)$ is equal to the rank of
 zero-one matrix $M$.

 Note that the rank of $M$ is equal to the number of linearly
 independent columns of $M$, in other words, the number of linearly
 independent paths. In general, most paths are not independent. For
 example, in Figure \ref{fig:net2221}, we can see that the column
 corresponding to the path
 $w_2w_7w_{10}$ can be produced by combining 3 columns corresponding to
 paths $w_1w_5w_9$,  $w_1w_7w_{10}$, and $w_2w_5w_9$.

 In order to count the number of independent paths, we use 
mathematical
 induction. For simplicity, consider a layered graph with $d$
 layers. All the edges from
 the $(d-1)$th layer nodes to the output layer nodes are linearly 
independent,
 because they correspond to different parameters. So far we have
 $n_dn_{d-1}$ independent paths.

 Next, take one node $u_0$
 (e.g., the leftmost node) from the $(d-2)$th layer. All the paths 
starting
 from this node through the layers above are linearly
 independent. However, other nodes in this layer only contributes
 linearly to the number of independent paths. This is the case because
we can take an edge $(u,v)$, where $u$ is one of the remaining 
$n_{d-2}-1$
 vertices in the $(d-2)$th layer and $v$ is one of the $n_{d-1}$ 
nodes in
 the $(d-1)$th layer, and we can take any path (say $p_0$) from there 
to
 the top  layer. Then this is the only independent path that uses the
 edge $(u,v)$, because any other combination of edge $(u,v)$ and path
 $p$ from $v$ to the top layer can be
 produced as follows (see Figure \ref{fig:dependence}):
 \begin{align*}
  (u,v)\rightarrow p = (u,v)\rightarrow p_0 - (u_0,v)\rightarrow p_0 +
  (u_0,v)\rightarrow p.
 \end{align*}
 Therefore after considering all nodes in the $d-2$th layer, we have
 \begin{align*}
  n_{d}n_{d-1} + n_{d-1}(n_{d-2}-1)
 \end{align*}
 independent paths. Doing this calculation inductively, we have
 \begin{align*}
  n_{d}n_{d-1} + n_{d-1}(n_{d-2}-1) + \cdots + n_{1}(n_0-1)
 \end{align*}
 independent paths, where $n_0$ is the number of input units. This
 number is clearly equal to the number of parameters
 ($n_dn_{d-1}+\cdots+ n_{1}n_0$) minus the number of internal nodes 
($n_{d-1}+\cdots+n_1$).
\end{proof}
\begin{figure}[thb]
 \begin{center}
  \includegraphics[clip,width=0.7\textwidth]{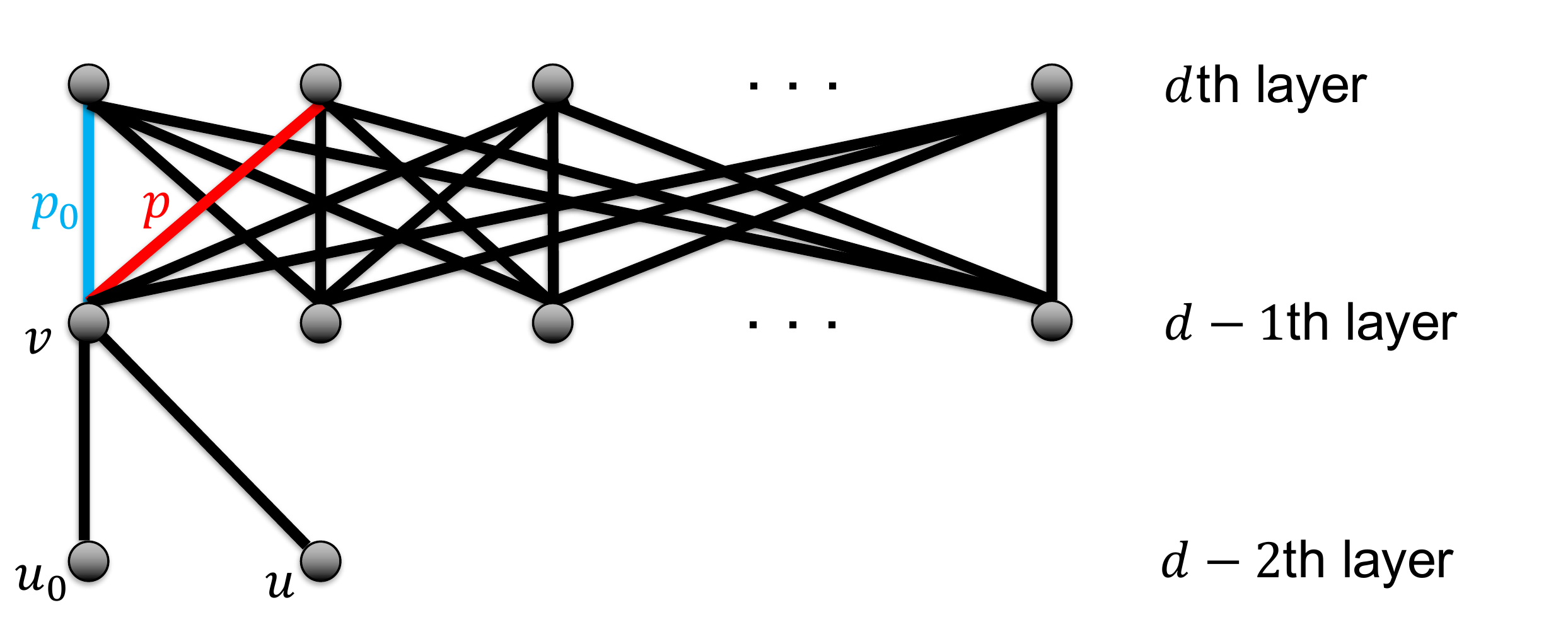}
 \end{center}
  \caption{Schematic illustration of the linear dependence of the four
  paths $(u_0,v)\rightarrow p_0$, $(u_0,v)\rightarrow p$,
  $(u,v)\rightarrow p_0$, and $(u,v)\rightarrow p$. Because of this
  dependence, any additional edge $(u,v)$ only contributes one
  additional independent path.}
  \label{fig:dependence}
\end{figure}
\removed{
\section{Batch Normalization}
In the Batch-Normalization formulation, we normalize the output of 
the linear layer before the activations but then the extra scaling 
parameter $r$ and bias $\theta$ are applied so that the expressive 
power of the linear layer remains the same. So the output of the 
normalized layer for node $v$ is:
$$
\hat{z}_v = \frac{z_v-E[z_v]}{E\left[\big(z_v-E[z_v]\big)^2\right]}
$$
and output of node $v$ is $h_{v} = [r\hat{z}_v+\theta]_+$. Therefore, 
Batch Normalization takes four steps to calculate the output: 
centralizing the data, normalizing, scaling and adding bias. 
Understanding the importance and usefulness of each of these steps 
can be helpful.

\section{Path-SGD}
Path-SGD scales each direction of the gradient for the edge $e$ by 
scaling factor $\kappa_p^2(w,e)$:
\begin{equation}
 \label{eq:path-sgd}
w_e^{(t+1)}=w_e^{(t)}- 
\frac{\eta}{\kappa_p^2(w^{(t)},e)}\frac{\partial L}{\partial 
w}(w^{(t)})
\end{equation}
where $\kappa_p(w,e)$ is the $\ell_p$ norm of a vector whose 
coordinates are correspond to all possible paths through edge $e$ 
from input units to output units and the value of each coordinate is 
the product of weights along the corresponding path:
\begin{equation}
 \label{eq:path-complexity}
\kappa_p(w,e) = \left(\sum_{u\in V_\In, v\in V_\Out}\sum_{u 
\overset{e}{\rightarrow } \dots v } \prod_{e'\neq 
e}\abs{w(e')}^p\right)^{1/p}
\end{equation}

\section{How natural gradient relates to Path-SGD}
\begin{theorem}
The natural gradient algorithm \eqref{eq:ng} reduces to
 path-SGD~\eqref{eq:path-sgd} with $p=2$, (1) if we replace the path 
covariance matrix
\eqref{eq:path-cov} by an identity matrix, and (2) if we only 
consider the diagonal entries
 of the resulting Fisher information matrix.
\end{theorem}
\begin{proof}
 Let
 \begin{align*}
  \tilde{F}(w)[e,e']=\sum_{p\in\Pi}
\frac{\partial
 \pi_p(w)}{\partial w_e}
  \frac{\partial \pi_{p}(w)}{\partial w_{e'}}
 \end{align*}
be the approximate Fisher information matrix obtained by replacing the
 path covariance matrix \eqref{eq:path-cov} by an identity matrix.

 Since
\begin{align*}
  \frac{\partial \pi_p(w)}{\partial w_e}=\frac{\pi_p(w)}{w_e}=
  \prod_{e\in E(p)\backslash \{e\}} w_e,
 \end{align*}
 the diagonal entries of $\tilde{F}(w)$ can be written as
 \begin{align*}
  \tilde{F}(w)[e]=\sum_{p\in\Pi}  \prod_{e\in E(p)\backslash \{e\}}
  w_e^2 = \kappa_2^2(w,e).
 \end{align*}
 Thus the natural gradient update \eqref{eq:ng} reduces to
 \begin{align*}
  w^{(t+1)}=w^{(t)}-\frac{\eta }{\kappa_2^2(w^{(t)},e)}\frac{\partial
  L}{\partial w}\left(w^{(t)}\right),
 \end{align*}
if we only consider the diagonal entries of $\tilde{F}(w^{(t)})$
\end{proof}

Replacing the path covariance matrix with an identity matrix amounts 
to
assuming that all paths activate independently and also removes any
input distribution dependency. Ignoring the off diagonal entries of 
the
Fisher information matrix makes natural gradient only invariant to
locally diagonal linear transformation to the parameter $w$.

}

\end{document}